\documentclass[lettersize, journal]{IEEEtran}

\usepackage[T1]{fontenc}      
\usepackage[utf8]{inputenc}   
\usepackage{textcomp}          
\usepackage{url}               
\usepackage{verbatim}          

\usepackage{amsmath}           
\usepackage{amsfonts}          
\usepackage{amssymb}           
\usepackage{mathrsfs}          

\usepackage{graphicx}          
\usepackage[caption=false,labelfont=sf,textfont=sf]{subfig}

\usepackage{booktabs}          
\usepackage{multirow}          
\usepackage{makecell}          
\usepackage{array}             
\usepackage{diagbox}           
\usepackage{tablefootnote}     
\usepackage{threeparttable}    

\usepackage{algorithm}         
\usepackage{algorithmic}       

\usepackage{balance}           
\usepackage{enumitem}          
\usepackage{color}             
\usepackage{xcolor}            
\usepackage{xspace}            
\usepackage{adjustbox}         
\usepackage{pifont}            
\usepackage{tikz}              
\usepackage{cite}              

\def\ie{\textit{i.e.}\xspace}

\def\eg{\textit{e.g.}\xspace}

\def\method{CoASFL\xspace}

\newtheorem{assumption}{\textbf{Assumption}}
\newtheorem{lemma}{\textbf{Lemma}}
\newtheorem{theorem}{\textbf{Theorem}}

\newcommand*{\circled}[1]{\lower.8ex\hbox{\tikz\draw (0pt, 0pt)%
    circle (.47em) node {\makebox[0.4em][c]{\small #1}};}}

\begin{document}

\title{CoASFL: Coupled Consistency Optimization for Asynchronous Split Federated Learning}


\author{
    Junhao Cheng,~
    Yunming Liao~,
    Hongli Xu,~\IEEEmembership{Member,~IEEE,}~
    Yang Xu,~\IEEEmembership{Member,~IEEE,}~
    Zuan Xie,~
    \IEEEcompsocitemizethanks{
        \IEEEcompsocthanksitem 
J. Cheng, Y. Liao, H. Xu, Y. Xu, and Z. Xie are with the School of Computer Science and Technology, University of Science and Technology of China, Hefei, Anhui, China, 230027, and also with Suzhou Institute for Advanced Research, University of Science and Technology of China, Suzhou, Jiangsu, China, 215123. \protect E-mails: jhcheng01@mail.ustc.edu.cn, ymliao98@mail.ustc.edu.cn, xuhongli@ustc.edu.cn, xuyangcs@ustc.edu.cn, xz1314@mail.ustc.edu.cn.
    }
}

\markboth{IEEE Transactions on Mobile Computing}%
{Cheng \MakeLowercase{\textit{et al.}}: CoASFL}


\maketitle

\begin{abstract}
Split Federated Learning (SFL) has emerged as a promising paradigm for training large-scale deep models on resource-constrained edge devices.
Although asynchronous SFL alleviates the straggler bottleneck caused by system heterogeneity, it inevitably introduces two coupled dimensions of inconsistency: horizontal inconsistency across clients and vertical inconsistency between clients and the server.
These inconsistencies, further exacerbated by statistical heterogeneity, result in biased activation representations, model divergence, and severe client drift.
To address these challenges, we propose \method, an inconsistency-aware asynchronous SFL framework that incorporates a dynamic activation merging mechanism and a consistency-aware model aggregation mechanism.
Specifically, \method constructs more consistent activation sequences from heterogeneous clients to alleviate horizontal inconsistency, and dynamically adjusts aggregation weights according to client participation frequencies and progress gaps to mitigate vertical inconsistency.
By jointly coordinating activation-level and model-level consistency, \method effectively suppresses inconsistency propagation during asynchronous training.
Extensive experiments on an 80-device heterogeneous edge testbed demonstrate that \method consistently outperforms state-of-the-art SFL approaches, accelerating training by 1.61$\times$ to 3.74$\times$ and improving final model accuracy by up to 3.26\% under heterogeneous scenarios.

\end{abstract}

\begin{IEEEkeywords}
Split Federated Learning, Asynchronous Training, Edge Intelligence, Consistency Optimization.
\end{IEEEkeywords}

\section{Introduction}\label{sec:intro}

\IEEEPARstart{T}{he} rapid advancement of deep learning has driven the deployment of intelligent applications at the network edge, including autonomous systems, intelligent healthcare, and industrial Internet of Things (IoT) \cite{li2019edge, wang2019edge}.
However, training increasingly large deep models on distributed edge devices remains challenging due to the limited computation capability, storage capacity, and communication resources of individual devices \cite{park2021communication, niu2020billion}.
Federated Learning (FL) provides a privacy-preserving paradigm for collaborative model training by keeping raw data locally while aggregating knowledge from distributed clients \cite{mcmahan2017communication, kairouz2021advances, liao2023accelerating}.
Nevertheless, conventional FL requires frequent exchanges of complete model parameters between clients and the central server, resulting in substantial communication and computation overhead when training large-scale models \cite{abedi2024fedsl}.
Split Federated Learning (SFL) addresses this limitation by partitioning the model into client-side bottom models and a server-side top model, where clients exchange intermediate activations and split-layer gradients rather than full model parameters \cite{pathak2020fedsplit, thapa2022splitfed}.
By reducing the resource burden on edge devices, SFL has become an attractive solution for collaborative intelligence in heterogeneous edge environments \cite{10910050, 11219319}.

Despite its advantages, conventional SFL generally follows a synchronous training paradigm, where the server must wait for all clients to complete local training before updating the global model \cite{liao2024parallelsfl}.
Such synchronization barriers severely restrict training efficiency in practical edge systems, where clients often exhibit significant heterogeneity in computation capability, network conditions, and resource availability \cite{li2023fedvs, shen2024ringsfl, gupta2023fedcare, liang2026towards}.
For instance, heterogeneous edge devices may require substantially different execution times to process identical workloads, while unstable wireless networks can further increase activation transmission delays \cite{xu2022adaptive, liao2023adaptive, liao2024asynchronous}.
Consequently, slow clients become stragglers that postpone global updates and leave the resources of faster clients underutilized.
To address this issue, recent studies \cite{gao2024pipesfl, yang2025gas, ao2025semi} have introduced asynchronous execution into SFL, allowing the server to continuously process available client updates without waiting for full synchronization.
Although asynchronous SFL effectively improves resource utilization and alleviates the straggler problem, it also fundamentally changes the optimization dynamics of SFL by introducing new consistency challenges.

Specifically, asynchronous SFL suffers from two coupled inconsistency dimensions: horizontal inconsistency across clients and vertical inconsistency between clients and the server.
Horizontal inconsistency arises because heterogeneous computation and communication capabilities cause activations to arrive at different rates. 
As the server updates the top model based on these arrivals, faster clients exert disproportionate influence on the optimization, and under non‑IID data this imbalance is further amplified, deviating the top model from the global objective and impairing convergence \cite{cho2022towards, ma2021fedsa}.
Vertical inconsistency arises because the server continuously updates the top model, while each bottom model is updated only through its own client interactions.
The resulting progress gaps distort split-layer gradients and bottom model updates, and their accumulation further biases bottom model aggregation and impairs convergence \cite{xie2019asynchronous, chen2020asynchronous}.
More importantly, these two inconsistencies are not independent and may interact during asynchronous training: inconsistent activations impair the quality of server-side optimization, whereas biased model aggregation may further amplify client drift among heterogeneous clients.
Therefore, jointly mitigating horizontal and vertical inconsistencies becomes a fundamental challenge for achieving efficient and accurate asynchronous SFL.

Existing asynchronous SFL approaches only partially address these challenges.
Some studies primarily focus on improving execution efficiency through asynchronous parallelism.
For example, PipeSFL \cite{gao2024pipesfl} introduces a fine-grained pipeline execution mechanism to overlap client computation and server processing, thereby reducing idle time caused by stragglers.
However, it does not consider the inconsistency among asynchronously received intermediate activations.
GAS \cite{yang2025gas} further investigates asynchronous SFL under heterogeneous data distributions by generating intermediate activations to alleviate the impact of statistical heterogeneity, but its aggregation strategy cannot effectively handle model divergence caused by stale client updates.
Meanwhile, MergeSFL \cite{10597905} improves robustness against heterogeneous data by merging intermediate activations from multiple clients, but it is designed for synchronous SFL and cannot fully exploit asynchronous execution.
Therefore, existing methods either improve system efficiency or alleviate heterogeneity issues, while lacking a unified framework that simultaneously addresses the coupled horizontal and vertical inconsistencies in asynchronous SFL.

To address the above challenges, we propose \method, an inconsistency-aware asynchronous SFL framework that jointly suppresses inconsistency propagation during asynchronous training.
Instead of treating horizontal and vertical inconsistencies separately, \method coordinates them through a unified consistency optimization framework.
Specifically, \method integrates dynamic activation merging and consistency-aware model aggregation into a coupled optimization strategy.
The dynamic activation merging coordinates activation merging at the server to improve the consistency of top model updates, while the consistency-aware model aggregation regulates bottom model aggregation under unequal client participation and progress mismatch.
Rather than independently optimizing activation representations or aggregation reliability, the two mechanisms reinforce each other to preserve activation-level and model-level consistency throughout training.
As a result, \method achieves both efficient convergence and high model accuracy under heterogeneous edge environments.
The main contributions of this paper are summarized as follows:

\begin{itemize}
    \item We identify the dual inconsistency problem in asynchronous SFL, including horizontal inconsistency across clients and vertical inconsistency between clients and the server, and analyze their impact on model convergence and accuracy.
    \item We propose \method, an inconsistency-aware asynchronous SFL framework that jointly coordinates activation-level and model-level consistency through dynamic activation merging and consistency-aware model aggregation, effectively suppressing inconsistency propagation throughout asynchronous training.
    \item We implement \method on an 80-device heterogeneous edge testbed and conduct extensive experiments. The results demonstrate that \method accelerates training by 1.61$\times$ to 3.74$\times$ and improves final model accuracy by up to 3.26\% under heterogeneous scenarios, compared to state-of-the-art SFL approaches.
\end{itemize} 

The rest of the paper is organized as follows.
Section \ref{sec:prelim} mainly introduces the impact of horizontal and vertical inconsistency.
Section \ref{sec:design} elaborates on the design of \method.
Section \ref{sec:conv} presents a convergence analysis of \method.
Section \ref{sec:evaluation} discusses the experimental results.
Section \ref{sec:relworks} introduces the related works.
Finally, Section \ref{sec:conclusion} concludes the paper.

\section{Background and Motivation}\label{sec:prelim}
\subsection{Split Federated Learning}
Split federated learning (SFL) enhances FL by integrating split learning \cite{thapa2022splitfed}, providing an effective solution for training large-scale models on resource-constrained devices.
We consider a typical SFL system that consists of a parameter server (PS) and $N$ clients.
\textit{By default, all clients use the same fixed batch size for training}.
The core concept of SFL is to introduce a split layer that divides the model $\boldsymbol{w}$ into a bottom model $\boldsymbol{w}_b$ retained by the clients and a top model $\boldsymbol{w}_p$ held by the PS, \ie, $\boldsymbol{w}=[\boldsymbol{w}_b,\boldsymbol{w}_p]$.
The basic training process of the synchronous SFL requires cooperation between the clients and the PS.

Let $\boldsymbol{w}^{h,k}_{b,i}$ denote the bottom model of client $i$ and $\boldsymbol{w}_{p}^{h,k}$ denote the top model at iteration $k$ in round $h$. 
Client $i$ performs forward propagation on $\boldsymbol{w}^{h,k}_{b,i}$ using a local batch sampled from its dataset $d_i$, and then sends the resulting activations to the PS.
The empirical loss associated with the bottom model of client $i$ is represented as:
\begin{equation}
F_{b,i}(\boldsymbol{w}_{b,i}^{h,k})
=
\frac{1}{|d_i|}
\sum_{x\in d_i}
\ell(x;
\boldsymbol{w}_{b,i}^{h,k},
\boldsymbol{w}_{p}^{h,k})
\label{eq:sfl_loss_sample}
\end{equation}

\noindent where $\ell(x;\boldsymbol{w}_{b,i}^{h,k},\boldsymbol{w}_{p}^{h,k})$ denotes the loss function computed by the current bottom and top models using sample $x$.
After receiving activations from all clients, the PS completes forward and backward propagation and updates the top model as:
\begin{equation}
\boldsymbol{w}_{p}^{h,k+1}
=
\boldsymbol{w}_{p}^{h,k}
-
\eta\cdot\frac{1}{N}
\sum_{i=1}^{N}
\widetilde{\nabla}
F_{p,i}(\boldsymbol{w}_{p}^{h,k})
\label{eq:sfl_top_update}
\end{equation}

\noindent where $F_{p,i}(\boldsymbol{w}_{p}^{h,k})$ denotes the loss computed using the activations uploaded by client $i$, $\widetilde{\nabla} F_{p,i}(\boldsymbol{w}_{p}^{h,k})$ is its stochastic gradient with respect to the top model, and $\eta$ is the learning rate.
The PS then returns the corresponding split layer gradient to client $i$, which updates its bottom model as:
\begin{equation}
\boldsymbol{w}_{b,i}^{h,k+1}
=
\boldsymbol{w}_{b,i}^{h,k}
-
\eta\cdot
\widetilde{\nabla}
F_{b,i}(\boldsymbol{w}_{b,i}^{h,k})
\label{eq:sfl_bottom_update}
\end{equation}
\noindent where $\widetilde{\nabla}F_{b,i}(\boldsymbol{w}_{b,i}^{h,k})$ is the stochastic gradient computed using the split layer gradient returned by the PS.
At the end of round $h$, the PS aggregates bottom models from all clients as: 
\begin{equation}
\boldsymbol{w}_{b}^{h+1}
=
\sum_{i=1}^{N}
A_i^h\boldsymbol{w}_{b,i}^{h},
\qquad
A_i^h
=
\frac{n_i^h}
{\sum_{j=1}^{N}n_j^h}
\label{eq:conventional_aggregation}
\end{equation}

\noindent where $n_i^h$ denotes the number of samples processed by client $i$ in round $h$, $A_i^h$ denotes its aggregation weight, and the aggregation reduces to uniform averaging when all $n_i^h$ are equal, \ie, $A_i^h=1/N$.

Unlike the synchronous SFL, asynchronous SFL removes the per-iteration barrier.
The PS immediately processes each arriving activation and updates the top model without waiting for all participating clients.
Meanwhile, the client-side bottom models are periodically aggregated according to Eq.~\eqref{eq:conventional_aggregation} to establish a shared global state.
We define the interval between two consecutive bottom model aggregations as an aggregation round.
Within each round, top model updates are performed asynchronously, whereas the bottom models are synchronized at the round boundary.
This execution pattern introduces inconsistency along two structural dimensions: horizontally across clients and vertically across the clients and server.

\begin{figure}[t]
    \centering
    \subfloat[Label divergence at the PS.\label{fig:horizontal_label}]{
        \includegraphics[width=0.45\linewidth,height=3.3cm]{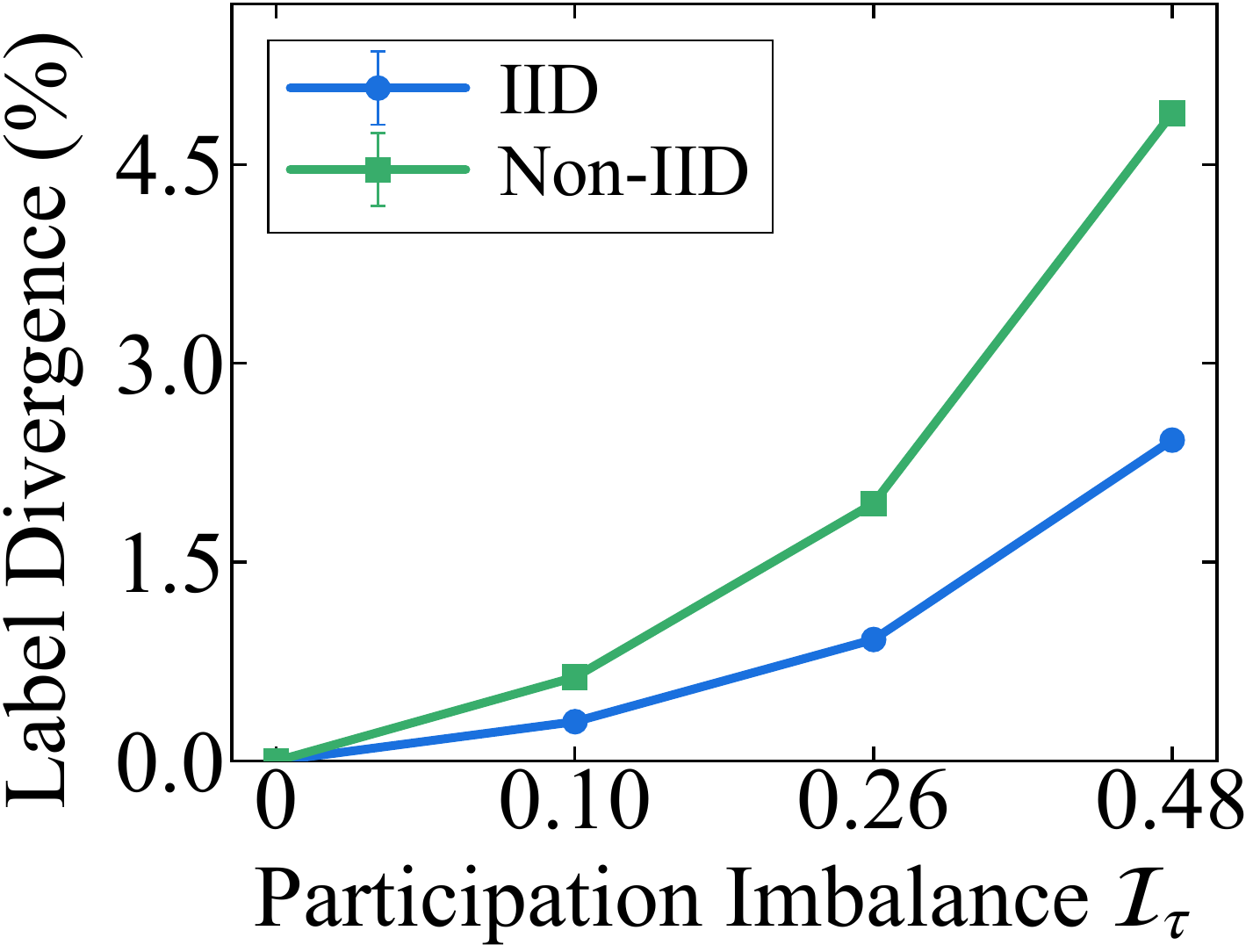}
    }
    \hfill
    \subfloat[Top-update direction deviation.\label{fig:horizontal_direction}]{
        \includegraphics[width=0.45\linewidth,height=3.3cm]{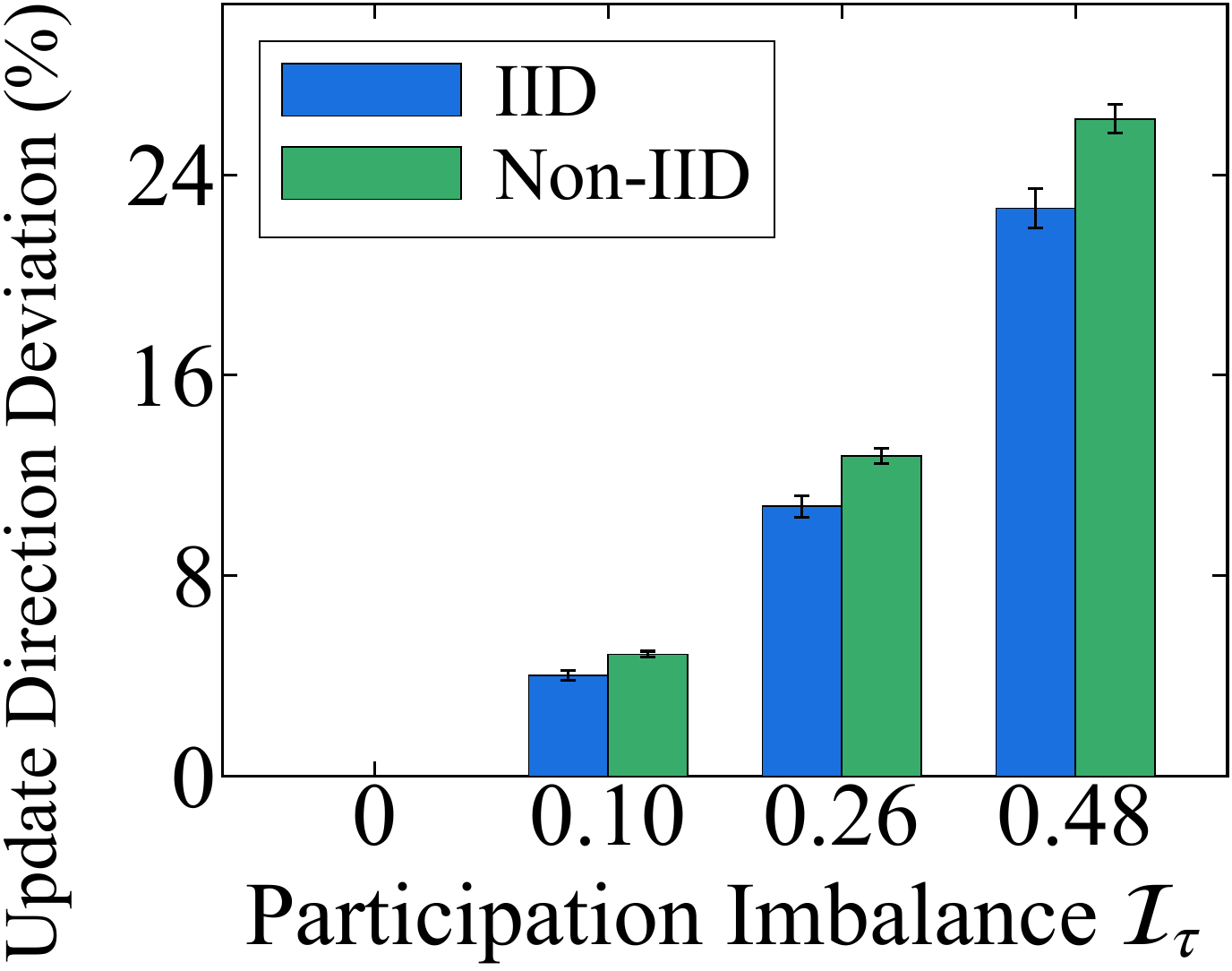}
    }
    \caption{Impact of horizontal inconsistency.}
    \label{fig:impact_of_horizontal_inconsistency}
\end{figure}

\subsection{Characterization of Horizontal Inconsistency}
\label{sec:impact_horizontal_inconsistency}


In asynchronous SFL, heterogeneous client computation and communication capabilities cause activations to arrive at the PS at different rates.
Since each arriving activation immediately triggers a top model update, faster clients participate in substantially more updates than slower ones, leading to \textbf{horizontal inconsistency}.
Under IID data, this inconsistency mainly changes the relative contribution of clients.
Under non-IID data, however, repeatedly arriving activations from the same clients skew the effective activation and label distributions observed by the PS, causing the optimization direction of the top model to gradually deviate from the global objective.


To characterize horizontal inconsistency, we conduct a controlled experiment using DeiT on CIFAR-100.
The model is split at the 6th layer and trained with 20 clients under both IID and non-IID settings.
Since unequal client influence is directly reflected by their participation frequencies, we use participation imbalance as the measurable indicator in this experiment.
Let $\tau_i^h$ denote the number of top model updates involving client $i$ during aggregation round $h$, and quantify participation imbalance as:
\begin{equation}
\mathcal{I}_{\tau}^{h}
=
1-
\frac{
\left(\sum_{i=1}^{N}\tau_i^h\right)^2
}{
N\sum_{i=1}^{N}\left(\tau_i^h\right)^2
}
\label{eq:participation_imbalance}
\end{equation}
Here, $\mathcal{I}_{\tau}^{h}=0$ indicates balanced participation, whereas a larger value represents a more imbalanced participation pattern in this controlled experiment.
To assess whether the top model learns evenly from different client contributions, we examine the divergence between the label distribution in activations at the PS and that under balanced client participation.
To quantify the bias in top model learning direction, we measure the deviation from the balanced top model update direction.
Meanwhile, we keep the model state, sampled batches, and client-specific update directions unchanged, so the measured differences are caused solely by unequal client participation.

As shown in Fig.~\ref{fig:horizontal_label}, as $\mathcal{I}_{\tau}^{h}$ increases, the distribution of labels associated with the activations processed by the PS increasingly deviates from the reference distribution under balanced client participation.
Under the same participation imbalance, non-IID data produce nearly twice the divergence of IID data, indicating that statistical heterogeneity amplifies the distribution bias.
Fig.~\ref{fig:horizontal_direction} further shows that the update direction deviation of the top model increases from nearly zero under balanced participation to approximately $23\%$ and $26\%$ at the highest imbalance level for IID and non-IID data, respectively.
These results demonstrate that unequal client participation systematically biases data distribution and update direction toward frequently participating clients, \ie, horizontal inconsistency.
As they accumulate, uneven learning across clients and biased update directions slow model convergence and reduce final accuracy, especially under non-IID data.
Therefore, the server should coordinate arriving activations to preserve activation diversity instead of updating the top model solely according to arrival order.


\begin{figure}[t]
    \centering
    \subfloat[Split layer gradient deviation.\label{fig:vertical_gradient}]{
        \includegraphics[width=0.45\linewidth,height=3.3cm]{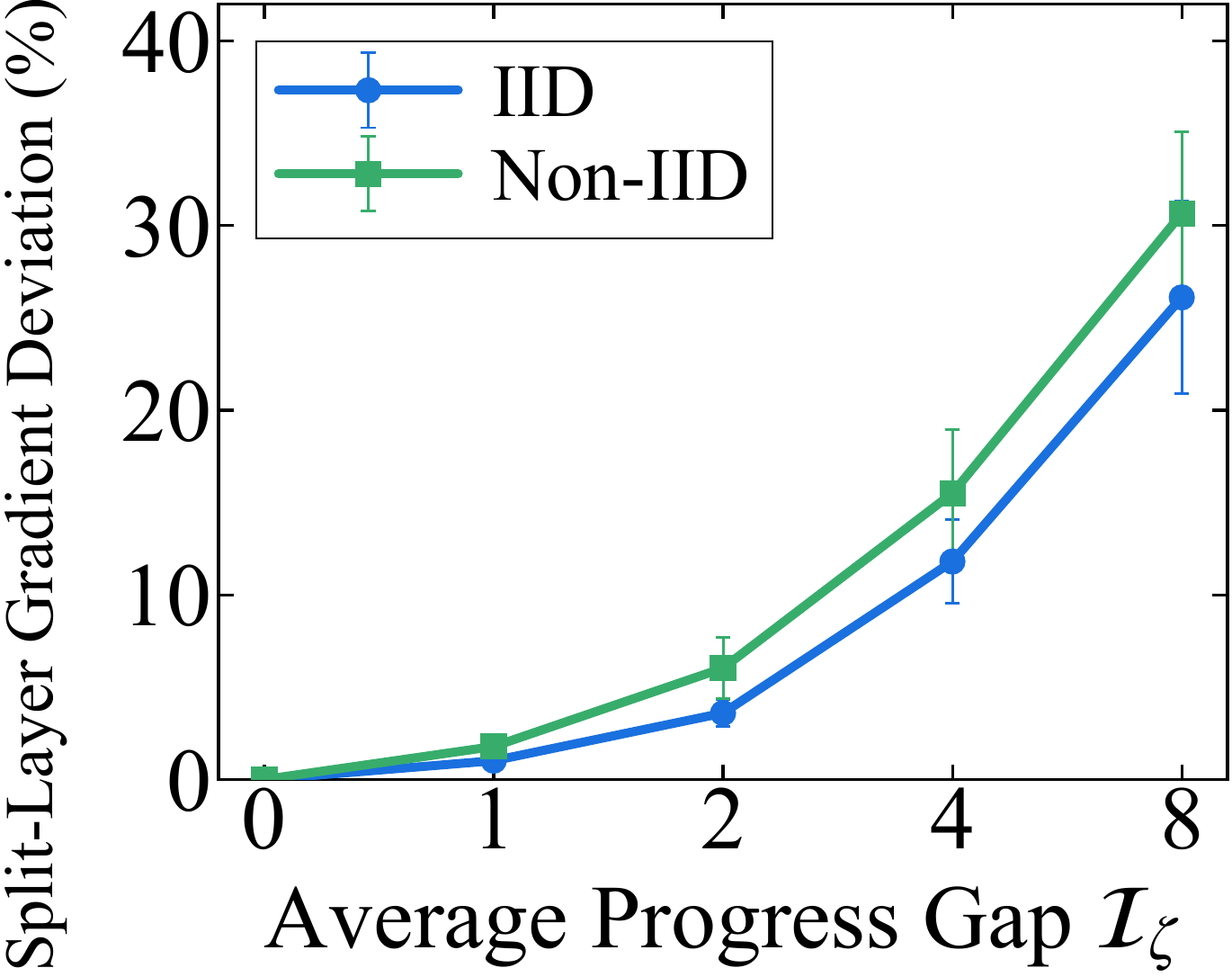}
    }
    \hfill
    \subfloat[Model aggregation deviation.\label{fig:vertical_aggregation}]{
        \includegraphics[width=0.45\linewidth,height=3.3cm]{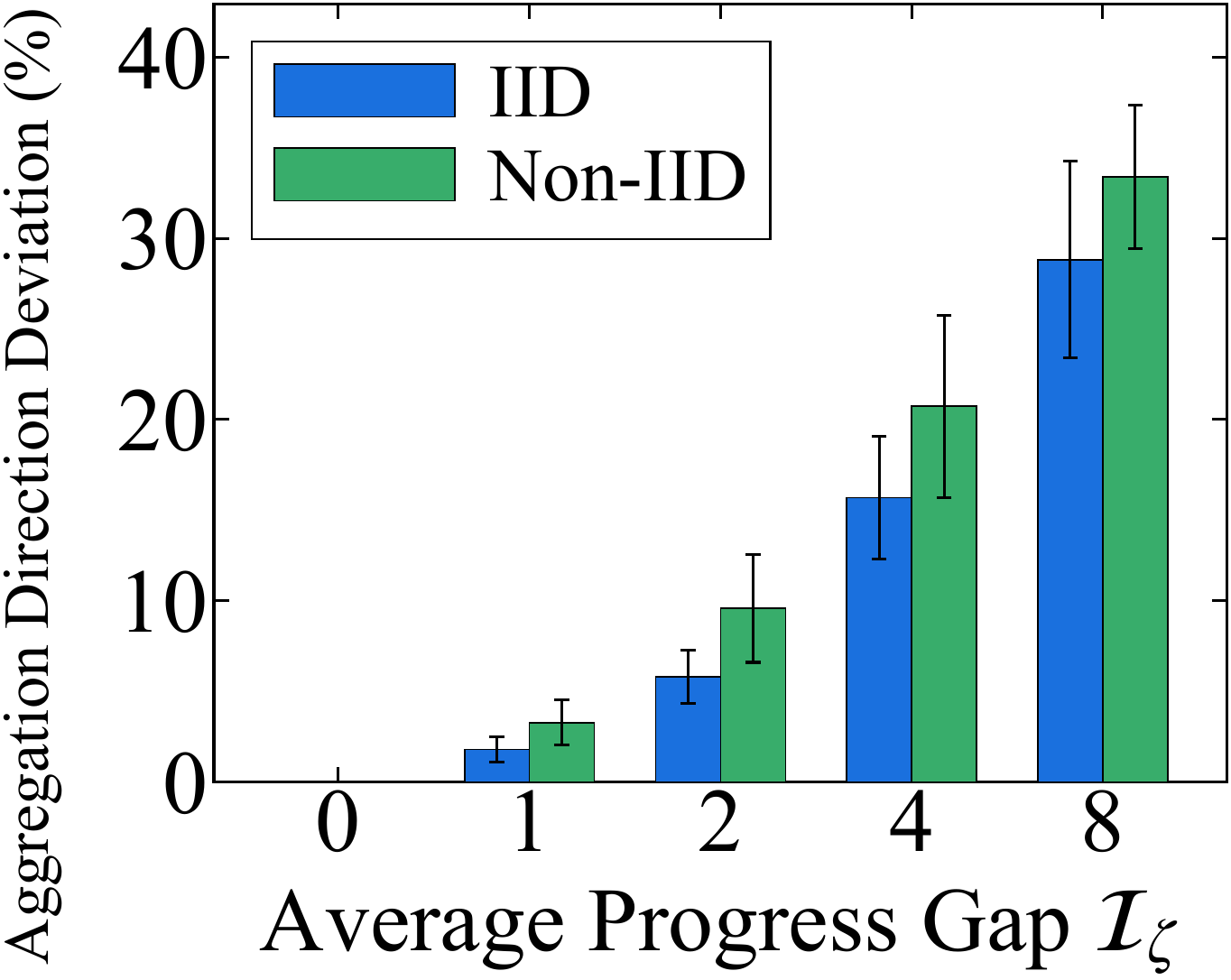}
    }
    \caption{Impact of vertical inconsistency.}
    \label{fig:impact_of_vertical_inconsistency}
\end{figure}


\subsection{Characterization of Vertical Inconsistency}
While horizontal inconsistency biases server-side optimization, asynchronous SFL also introduces \textbf{vertical inconsistency} between the bottom model on each client and the top model on the PS.
The top model advances whenever an activation is processed, whereas each bottom model progresses only through its own client interactions. 
Consequently, different clients may exhibit different progress gaps between their bottom models and the current top model.
As this progress gap increases, the gradient exchanged at the split layer deviates more from that under aligned model partitions, further affecting the corresponding bottom model update.
Aggregating bottom models updated under different partition states may therefore bias the global bottom model update direction.


Using the same experimental setup as Sec.~\ref{sec:impact_horizontal_inconsistency}, for each evaluated client $i$ participating in round $h$, we define its average interaction progress gap as:
\begin{equation}
    \zeta_i^h
    =
    \frac{1}{\tau_i^h}
    \sum_{k=1}^{\tau_i^h}
    \left(r_{i,k}^h - k\right)
    \label{eq:progress_gap}
\end{equation}
where $r_{i,k}^h$ denotes the top model update index corresponding to the $k$-th bottom model update of client $i$, and a larger $\zeta_i^h$ indicates that its bottom model updates are on average performed under a greater progress mismatch between the two model partitions.
To consistently quantify vertical inconsistency, we define its level as the average progress gap over the set $\mathcal{U}$ of evaluated bottom models :
\begin{equation}
\mathcal{I}_{\zeta}^{h}
=
\frac{1}{|\mathcal{U}|}
\sum_{i\in\mathcal{U}}\zeta_i^h
\label{eq:vertical_inconsistency}
\end{equation}
where $\mathcal{I}_{\zeta}^{h}=0$ corresponds to fully aligned model partitions.
To characterize vertical inconsistency, we measure the split layer gradient deviation and aggregation direction deviation, which quantify the difference in update direction and gradient mismatch of the bottom model, respectively.
We vary $\mathcal{I}_{\zeta}^{h}$ over $\{0,1,2,4,8\}$.
For the split layer gradient experiment, the same input batch is processed using bottom model states corresponding to different progress gaps, while keeping the current top model and labels fixed.
For the aggregation experiment, client groups are assigned evenly spaced progress gaps whose mean equals $\mathcal{I}_{\zeta}^{h}$, and the resulting aggregated update direction is compared with that under fully aligned model partitions.

As shown in Fig.~\ref{fig:vertical_gradient}, the split layer gradient deviation consistently increases with $\mathcal{I}_{\zeta}^{h}$, reaching approximately $26\%$ under IID data and $30\%$ under non-IID data when $\mathcal{I}_{\zeta}^{h}=8$.
This indicates that a larger progress mismatch causes the returned gradient, and hence the bottom model update direction, to deviate further from that under aligned model partitions.
Fig.~\ref{fig:vertical_aggregation} further shows that such gradient mismatch propagates to bottom model aggregation, where the aggregation direction deviation reaches approximately $29\%$ and $33\%$ under IID and non-IID data, respectively.
The consistently larger deviations under non-IID data indicate that statistical heterogeneity amplifies the impact of vertical inconsistency.
As these inconsistencies accumulate across successive updates and aggregation rounds, they can cause the global model to deviate from the desired optimization trajectory, thereby slowing convergence and
reducing final accuracy.

\subsection{Discussion}

Although horizontal and vertical inconsistencies are analyzed separately above, they are tightly coupled during asynchronous SFL. 
Horizontal inconsistency biases the optimization direction of the top model, while the resulting server-side model evolution further enlarges the progress mismatch experienced by delayed clients.
In turn, bottom models updated under different partition states generate less consistent activations in subsequent rounds, further amplifying the server-side optimization bias caused by unequal client influence. 
Therefore, mitigating only one type of inconsistency cannot fundamentally stabilize asynchronous SFL.
Instead, activation-level consistency and model-level consistency should be jointly optimized in a unified framework.
Motivated by this observation, we propose \method, which integrates dynamic activation merging and consistency-aware model aggregation into a coupled consistency optimization framework.

\section{System Design}\label{sec:design}
\begin{figure}[t]
    \centering
    \includegraphics[width=\linewidth]{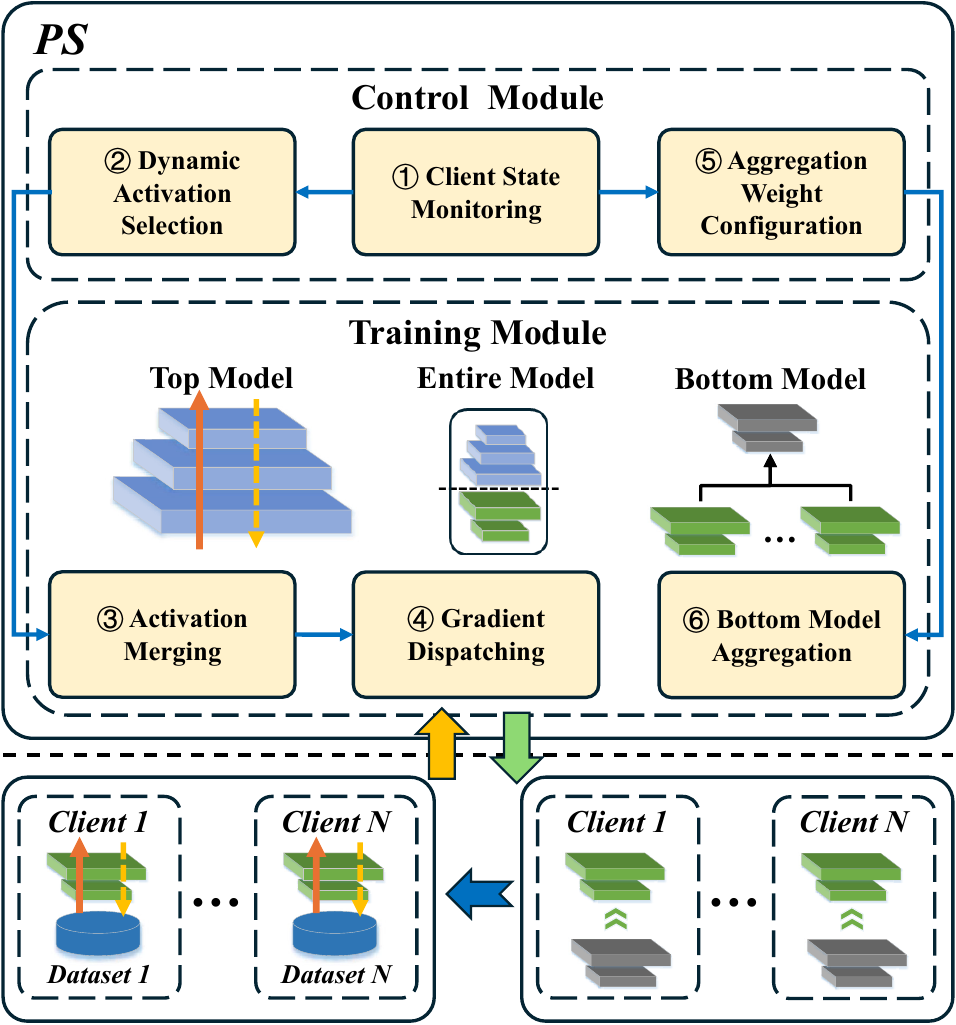}
    \caption{Overview of \method}
    \label{fig:AsySFL_overview}
\end{figure}

\subsection{Overview}
As illustrated in Fig.~\ref{fig:AsySFL_overview}, CoASFL consists of a control module and a training module that collaboratively optimize asynchronous split federated learning.
Specifically, the PS continuously monitors the training state of each client, including its participation frequency, progress gap, and runtime status throughout the training process (\circled{1}).
Based on the maintained client states, the PS determines a set of compatible activations from the activation buffer before each top model update (\circled{2}).
Then the selected activations are merged to optimize the top model (\circled{3}), and the corresponding split layer gradients are dispatched to the associated clients for bottom model optimization (\circled{4}).
Meanwhile, the monitored client states are continuously updated and used to determine the contribution of each client during model synchronization (\circled{5}).
Finally, the PS aggregates the bottom models using the configured weights and broadcasts the updated bottom model to all clients, initiating the next aggregation round (\circled{6}).
By jointly coordinating activation scheduling during top model optimization and adaptive aggregation during bottom model synchronization, CoASFL simultaneously mitigates horizontal and vertical inconsistencies in asynchronous SFL.

\subsection{Control Module}
During each aggregation round, the control module of the PS monitors client states and makes the decisions required by the training module.
Specifically, it determines the activation set for each top model update and configures the bottom model aggregation weights at the round boundary.

\subsubsection{\textbf{Client State Monitoring}}
The control module maintains the client information required for subsequent control decisions.
Specifically, the label distributions and computation and communication capabilities of all clients are collected to support dynamic activation selection.
Meanwhile, their participation frequencies and progress gaps are continuously maintained during each round for bottom model aggregation weight configuration at the round boundary.
Let $\mathbf{V}=\{v_j\geq 0, j\in [1,M]\}$ denote a categorical distribution of class labels over $M$ classes, where $\sum_{j=1}^{M}v_j=1$.
Since labels are transmitted alongside activations for loss computation in conventional SFL \cite{thapa2022splitfed}, the PS collects the empirical label distribution $\mathbf{V}_i$ of each client $i$ once before training and uses it to evaluate whether a client combination approaches the balanced label distribution.
Considering that privacy leakage is an important challenge in SFL, popular privacy protection techniques, \eg, Differential Privacy \cite{thapa2021advancements,wu2023federated}, Distance Correlation \cite{vepakomma2019reducing} and Homomorphic Encryption \cite{yang2023dynamic}, can be applied to protect the raw data, activations/gradients and models, and are orthogonal to the main focus of \method.

The time-varying computation and communication states are used to estimate how long a buffered activation may wait for a suitable combination.
As proxy metrics, we adopt the computing time of one iteration as $\mu^h_{i}$ and the corresponding transmission time $\beta^h_{i}$, which can be recorded by the clients directly during model training, to indicate the computing and communication capabilities of client $i$ in round $h$, respectively.
Before starting model training in round $h$, the PS collects the latest computing time $\hat{\mu}^h_i$ and transmission time $\hat{\beta}^h_i$ from all clients.
Additionally, we utilize the moving average with historical client states to enhance the robustness of the estimation \cite{leroy2019federated}.
Accordingly, the PS estimates the computing time $\mu^h_i$ and corresponding transmission time $\beta^h_i$ for client $i$ in round $h$ by calculating the moving average with $\alpha\in[0,1]$ (\eg, $0.8$ in our experiments) as:
\begin{align}
    \mu^h_i=\alpha\cdot\mu^{h-1}_i + (1-\alpha)\cdot\hat{\mu}^h_i \\
    \beta^h_i=\alpha\cdot\beta^{h-1}_i + (1-\alpha)\cdot\hat{\beta}^h_i 
\end{align}
\noindent Besides, it is worth noting that advancing the techniques for state estimation is not our focus, and other existing advanced estimation techniques \cite{halperin2010predictable, yue2017linkforecast} can be applied.

The training states $\tau_i^h$ and $\zeta_i^h$ are continuously updated throughout each aggregation round.
The participation frequency $\tau_i^h$ guides dynamic activation selection toward less frequently participating clients, while $\tau_i^h$ and $\zeta_i^h$ are jointly used to configure the bottom model aggregation weights at the round boundary.

\subsubsection{\textbf{Dynamic Activation Selection}}

As demonstrated in Sec.~\ref{sec:impact_horizontal_inconsistency}, updating the top model strictly according to activation arrival order biases the training sequence toward frequently participating clients.
To alleviate this bias, the PS buffers arriving activations and selects a subset before each top model update according to their label distributions and participation frequencies.
Let $\Theta^r$ denote the set of clients whose activations are pending before the $r$-th top model update.
Before each top model update, the control module selects a client subset $S^r\subseteq\Theta^r$ from the buffered activations.
Exhaustively searching the subsets of $\Theta^r$ online for each update is, however, computationally prohibitive.
We therefore pre-construct valid candidate combinations offline from the client label distributions before training.


To reduce the label-distribution bias under non-IID data, each candidate combination should represent a label distribution close to that under balanced client participation.
Because all clients use the same fixed batch size, we define the balanced label distribution as $\Phi_0=\frac{1}{N}\sum_{i=1}^{N}\mathbf{V}_i$.
For a client combination $C\subseteq\{1,\ldots,N\}$, its label distribution is $\Phi_C=\frac{1}{|C|}\sum_{i\in C}\mathbf{V}_i$.
We use the KL divergence to measure the gap between $\Phi_C$ and $\Phi_0$ as follows \cite{goldberger2003efficient,4218101}:
\begin{equation}
    \mathrm{KL}(\Phi_C\|\Phi_0)
    =
    \sum_{j=1}^{M}
    \Phi_C(v_j)
    \log
    \frac{\Phi_C(v_j)}{\Phi_0(v_j)}
    \label{eq:distribution_divergence}
\end{equation}
A smaller value indicates that the activations from $C$ represent a label distribution closer to that under balanced client participation.
Consequently, a combination is regarded as valid if it satisfies the constraints:
\begin{equation}
\left\{
\begin{array}{l}
    \mathrm{KL}(\Phi_C\|\Phi_0)\leq\epsilon\\
    |C|\leq\Theta_0
\end{array}
\right.
\label{eq:valid_combination_constraints}
\end{equation}
where $\epsilon\geq0$ is a predefined distribution threshold, and $\Theta_0$ denotes the activation buffer capacity.
We set $\epsilon=0.25$ and $\Theta_0=N/5$ in our experiments, under which each client belongs to at least one valid candidate combination.










Given the exponential growth of possible client combinations with respect to $N$, we adopt a greedy algorithm to construct the valid candidate set $\mathcal{Y}$.
Starting from each client as a seed, the procedure iteratively adds the unselected client that yields the smallest KL divergence after expansion.
Each intermediate combination satisfying Eq.~\eqref{eq:valid_combination_constraints} is retained, and the expansion process terminates once its size reaches $\Theta_0$. 
Each seed requires at most $\Theta_0-1$ successive expansions, yielding an overall construction complexity of $O(N^2\Theta_0M)$ rather than the exponential cost of exhaustive enumeration.
This construction is performed only once before training and produces at most $N\Theta_0$ candidate combinations, which are then reused during online activation selection.

During online training, the PS obtains the available candidate set $\mathcal{A}^r = \{C \in \mathcal{Y} \mid C \subseteq \Theta^r\}$ from the buffered activations.
When $\mathcal{A}^r \neq \emptyset$, the control module selects a combination $S^r$ by minimizing the average participation frequency:
\begin{equation}
    S^r
    =
    \arg\min_{C \in \mathcal{A}^r}
    \frac{1}{|C|}
    \sum_{i \in C} \tau_i^h
    \label{eq:activation_selection}
\end{equation}
This objective prioritizes combinations containing clients with lower participation frequencies, thereby progressively balancing client influence during the round, with ties broken by the smaller KL divergence.
When no candidate is available, the buffered activations are retained for subsequent matching.

However, this matching process may cause an activation to remain buffered for an extended duration when the other members of its candidate combinations have yet to arrive. 
To prevent indefinite buffering, we introduce a client-specific timeout threshold derived from the estimated computation and communication states. 
For client $i$ and a valid combination $C \in \mathcal{Y}$ containing $i$, the expected matching delay is the difference between the latest expected arrival time among the members of $C$ and that of client $i$ itself, \ie, $\max_{j \in C}(\mu_j^h + \beta_j^h) - (\mu_i^h + \beta_i^h)$. 
Following the practice of percentile-based tail-latency control in large-scale systems \cite{beyer2016site}, we set the timeout threshold $\lambda_i^h$ to the P90 value of these delays across all $C \in \mathcal{Y}$ with $i \in C$, and update it once per aggregation round.
Let $q_i^r$ denote the elapsed waiting time of the pending activation from client $i$. 
When one or more activations exceed their thresholds, the client with the largest timeout excess is identified as $i^\star$ according to:
\begin{equation}
i^\star = \arg\max_{i: q_i^r \geq \lambda_i^h} (q_i^r - \lambda_i^h) 
\label{eq:largest_timeout}
\end{equation}
The PS then restricts the candidate set to those containing $i^\star$.
The control module applies Eq.~\eqref{eq:activation_selection} to the remaining candidates.
If no available candidate contains $i^\star$, it greedily constructs a combination from the pending activations starting with $\{i^\star\}$, each time adding the client that yields the smallest KL divergence until a valid combination is formed or the combination size reaches $\Theta_0$. 
The expansion terminates when a valid combination is formed.
If validity remains unattainable, the intermediate combination with the smallest KL divergence is selected.
When the buffer reaches capacity without any valid candidate and no timeout has occurred, the longest-waiting client is treated as $i^\star$ and processed through the same greedy construction procedure.

\subsubsection{\textbf{Aggregation Weight Configuration}}

At the end of aggregation round $h$, the bottom models of different clients have incorporated different numbers of updates, and these updates have experienced different average progress gaps.
With the same fixed batch size, the conventional sample-size-based aggregation weight $A_i^h$ is proportional to the participation frequency $\tau_i^h$.
Although this weight reflects the amount of training completed by each client, it does not correct the contribution imbalance caused by unequal participation or account for the update deviation induced by cross-partition progress mismatch.
The control module thus reconfigures the aggregation weights using both $\tau_i^h$ and $\zeta_i^h$.

The two states characterize complementary properties of a bottom model.
Specifically, $\tau_i^h$ reflects the number of updates incorporated into the bottom model, whereas $\zeta_i^h$ reflects the average progress mismatch experienced during these updates.
Therefore, the correction should not compensate low-participation clients indiscriminately. 
Instead, participation compensation should be conditioned on the consistency of the updates incorporated into each bottom model, \ie,  a client with fewer but better-aligned updates receives stronger compensation, whereas a client whose updates experience a large progress mismatch should be discounted even if frequency is low.
We adopt a multiplicative form, which compresses the contribution disparity from unequal participation, decreases with the average progress gap, and reduces to one under balanced participation and fully aligned model partitions.


Let $\bar{\tau}^h=\frac{1}{N}\sum_{j=1}^{N}\tau_j^h$ denote the average participation frequency, and let $R^h$ denote the total number of top model updates in round $h$.
The correction factor for client $i$ is configured as:
\begin{equation}
    \xi_i^h
    =
    \frac{\bar{\tau}^h+1}{\tau_i^h+1}
    \cdot
    \frac{1}{1+\zeta_i^h/R^h}
    \label{eq:aggregation_correction_factor}
\end{equation}
The first term compensates the contribution loss caused by low participation relative to the round average. 
The second term acts as a consistency gate that weakens this compensation as the progress mismatch increases. 
Their multiplicative combination therefore jointly considers the horizontal contribution imbalance and the vertical reliability of bottom model updates.
Normalizing $\zeta_i^h$ by $R^h$ makes the correction comparable across rounds with different numbers of top model updates.
The additive constant in the participation term avoids division by zero and smooths the correction for small participation counts, while the leading one in the progress-gap term keeps the corresponding
factor neutral when $\zeta_i^h=0$.

The final aggregation weight is obtained by applying the correction factor to the base weight and normalizing the results across all clients, ensuring that the configured weights remain a valid convex combination:
\begin{equation}
    \widetilde{A}_i^h
    =
    \frac{A_i^h\xi_i^h}
    {\sum_{j=1}^{N}A_j^h\xi_j^h}
    \label{eq:configured_aggregation_weight}
\end{equation}
This normalization guarantees
$\widetilde{A}_i^h\geq0$ and
$\sum_{i=1}^{N}\widetilde{A}_i^h=1$.
When all clients have the same participation frequency and their bottom model updates experience no progress mismatch, $\xi_i^h=1$ for all clients, and the configured weights reduce to the conventional aggregation weights.


\subsection{Training Module}

The training module executes model training and bottom model synchronization according to the decisions made by the control module.
It consists of activation merging, gradient dispatching, and bottom model aggregation and distribution.

\subsubsection{\textbf{Activation Merging}}

For the $r$-th top model update, once the control module determines the activation set $S^r$, the PS retrieves the corresponding activations from the buffer and merges them into a mixed activation
sequence $G^r= [g_i^r\mid i\in S^r]$.
The PS then performs forward and backward propagation over $\boldsymbol{G}^r$ and updates the top model as follows:
\begin{equation}
    \boldsymbol{w}_p^{r+1}
    =
    \boldsymbol{w}_p^r
    -
    \eta
    \widetilde{\nabla}
    F_p(\boldsymbol{w}_p^r;G^r)
    \label{eq:top_model_update}
\end{equation}
where $\widetilde{\nabla}F_p(\boldsymbol{w}_p^r; G^r)$  denotes the gradient averaged over the samples represented by $G^r$.

\subsubsection{\textbf{Gradient Dispatching}}

After backward propagation, the PS obtains the mixed split layer gradient $\widehat{G}^r$ corresponding to $G^r$.
Based on the client-wise composition of $G^r$, the PS separates it as $\widehat{G}^r= [\widehat{g}_i^r\mid i\in S^r]$.
It then dispatches each separated gradient $\widehat{g}_i^r$ to the corresponding client.
Upon receiving the gradient, client $i$ completes backward propagation and updates its bottom model according to Eq.~\eqref{eq:sfl_bottom_update}.

\subsubsection{\textbf{Bottom Model Aggregation}}

Aggregation round $h$ terminates once every client has completed at least one update and the predefined average local update frequency is reached. 
After all pending gradients have been applied to the corresponding bottom models, the PS collects and aggregates them using the weights configured by the control module:
\begin{equation}
    \boldsymbol{w}_b^{h+1}
    =
    \sum_{i=1}^{N}
    \widetilde{A}_i^h
    \boldsymbol{w}_{b,i}^{h}
    \label{eq:work_bottom_aggregate}
\end{equation}
The PS subsequently distributes $\boldsymbol{w}_b^{h+1}$ to all clients for the next aggregation round, or be combined with the top model to form a complete model.

\section{Convergence Analysis}\label{sec:conv}
Since the training objective is generally nonconvex, we analyze the mean-square gradients of the bottom and top models over $H$ aggregation rounds to characterize their convergence behavior.
Following existing convergence analyses~\cite{liao2023accelerating,han2021accelerating,yang2021achieving,xu2022adaptive}, we adopt three standard assumptions.

\begin{assumption}\label{ass:coasfl_smooth}
\textit{(Lipschitz Continuous Gradient).}
The balanced objective $F$ and the bottom and top model gradients are $L$-smooth, satisfying the following inequalities:
\begin{equation}\label{eq:ca_smooth}
\begin{aligned}
\|\nabla F(\boldsymbol x)-\nabla F(\boldsymbol y)\|
 &\le L\|\boldsymbol x-\boldsymbol y\|\\
\|\nabla F_{b,i}(\boldsymbol x)-\nabla F_{b,i}(\boldsymbol y)\|
 &\le L\|\boldsymbol x-\boldsymbol y\|\\
\|\nabla F_{p,i}(\boldsymbol x)-\nabla F_{p,i}(\boldsymbol y)\|
 &\le L\|\boldsymbol x-\boldsymbol y\|
\end{aligned}
\end{equation}
\end{assumption}

\begin{assumption}\label{ass:coasfl_unbiased}
\textit{(Unbiased Local Gradient Estimator).}
The stochastic gradients are unbiased estimators of the corresponding bottom and top model gradients, expressed as follows:
\begin{equation}\label{eq:ca_unbiased}
\begin{aligned}
\mathbb E[\widetilde\nabla F_{b,i}(\boldsymbol w)]
 &=\nabla F_{b,i}(\boldsymbol w)\\
\mathbb E[\widetilde\nabla F_{p,i}(\boldsymbol w)]
 &=\nabla F_{p,i}(\boldsymbol w)
\end{aligned}
\end{equation}
\end{assumption}

\begin{assumption}\label{ass:coasfl_moment}
\textit{(Bounded Gradient Second Moment).}
The stochastic gradients of both model partitions have bounded second moments, satisfying:
\begin{equation}\label{eq:ca_moment}
\begin{aligned}
\mathbb E\|\widetilde\nabla F_{b,i}(\boldsymbol w)\|^2
 &\le\sigma_b^2\\
\mathbb E\|\widetilde\nabla F_{p,i}(\boldsymbol w)\|^2
 &\le\sigma_p^2
\end{aligned}
\end{equation}
\end{assumption}

Let $\boldsymbol w^h=[\boldsymbol w_b^h,\boldsymbol w_p^{h,0}]$ denote the joint model state at the beginning of round $h$, and let $S^{h,r}$ denote the activation set selected for its $r$-th top model update.
According to Eqs.~\eqref{eq:aggregation_correction_factor}--\eqref{eq:work_bottom_aggregate}, the normalized gradient contributions can be expressed as follows:
\begin{equation}\label{eq:ca_weights}
\begin{aligned}
\pi_i^h&=\frac1{R^h}\sum_{r=1}^{R^h}
 \frac{\mathbf1\{i\in S^{h,r}\}}{|S^{h,r}|}\\
\gamma^h&=\sum_{i=1}^N\widetilde A_i^h\tau_i^h\\
\rho_i^h&=\frac{\widetilde A_i^h\tau_i^h}{\gamma^h}
\end{aligned}
\end{equation}
Here, $\sum_i\pi_i^h=\sum_i\rho_i^h=1$.
Let $\mathbb E_h$ denote expectation conditioned on $\boldsymbol w^h$ and the execution schedule, taken to be independent of stochastic gradient sampling.

\begin{lemma}\label{lem:ca_direction}
Under Assumptions~\ref{ass:coasfl_smooth}--\ref{ass:coasfl_moment}, the expected update direction satisfies the following inequality:
\begin{equation}\label{eq:ca_direction}
\begin{aligned}
&\mathbb E_h\langle\nabla F(\boldsymbol w^h),\boldsymbol d^h\rangle\\
&\quad\ge\frac{\gamma^h}{2}
 \Big[\|\nabla F_b(\boldsymbol w^h)\|^2-(B_b^h)^2\Big]\\
&\qquad+\frac{R^h}{2}
 \Big[\|\nabla F_p(\boldsymbol w^h)\|^2-(B_p^h)^2\Big]
\end{aligned}
\end{equation}
where $\boldsymbol d^h=(\boldsymbol w^h-\boldsymbol w^{h+1})/\eta$, and $\nabla F_b=N^{-1}\sum_i\nabla F_{b,i}$ and $\nabla F_p=N^{-1}\sum_i\nabla F_{p,i}$ denote the two components of $\nabla F$.
The corresponding gradient deviation bounds are defined as follows:
\begin{equation}\label{eq:ca_bias}
\begin{aligned}
B_p^h={}&\left\|\sum_i\left(\pi_i^h-\frac1N\right)
 \nabla F_{p,i}(\boldsymbol w^h)\right\|\\
&+L\eta\left[\sigma_b\sum_i\pi_i^h(\tau_i^h-1)
 +\frac{\sigma_p(R^h-1)}2\right]\\
B_b^h={}&\left\|\sum_i\left(\rho_i^h-\frac1N\right)
 \nabla F_{b,i}(\boldsymbol w^h)\right\|\\
&+L\eta\left[\sigma_p\sum_i\rho_i^h\zeta_i^h
 +\frac{\sigma_b+\sigma_p}{2}
 \sum_i\rho_i^h(\tau_i^h-1)\right]
\end{aligned}
\end{equation}
\end{lemma}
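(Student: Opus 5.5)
The plan is to split $\boldsymbol d^h$ into its bottom and top blocks and bound each inner product separately with the elementary inequality $\langle a,b\rangle\ge\frac12\|a\|^2-\frac12\|a-b\|^2$. For the top block, unrolling Eq.~\eqref{eq:top_model_update} over the $R^h$ updates of round $h$ gives
\[
\boldsymbol d_p^h=\sum_{r=1}^{R^h}\frac{1}{|S^{h,r}|}\sum_{i\in S^{h,r}}\widetilde\nabla F_{p,i}\big(\boldsymbol w_p^{h,r-1};\boldsymbol w_{b,i}^{h,\cdot}\big).
\]
Swapping the order of summation, the client $i$ carries total weight $R^h\pi_i^h$. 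For the bottom block, Eq.~\eqref{eq:work_bottom_aggregate} together with $\sum_i\widetilde A_i^h=1$ gives
\[
\boldsymbol d_b^h=\sum_i\widetilde A_i^h\sum_{k=1}^{\tau_i^h}\widetilde\nabla F_{b,i}\big(\boldsymbol w_{b,i}^{h,k-1}\big)=\gamma^h\sum_i\rho_i^h\,\bar g_{b,i},
\]
where $\bar g_{b,i}$ is the average of client $i$'s $\tau_i^h$ local gradients. I then take $\mathbb E_h$. Since the schedule is independent of the sampling, Assumption~\ref{ass:coasfl_unbiased} replaces each stochastic gradient by its true gradient evaluated at the stale iterate at which it was computed.

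For the top block I write $\mathbb E_h\boldsymbol d_p^h=R^h\,\boldsymbol u_p$ and apply the inner-product inequality with $a=\nabla F_p(\boldsymbol w^h)$ and $b=\boldsymbol u_p$. This yields $\frac{R^h}{2}\big[\|\nabla F_p\|^2-\|\nabla F_p-\boldsymbol u_p\|^2\big]$, dropping the $\|b\|^2$ term as the inequality allows. I then bound $\|\nabla F_p-\boldsymbol u_p\|\le B_p^h$ by the triangle inequality. Replacing $\frac1N$ by the weights $\pi_i^h$ at $\boldsymbol w^h$ gives exactly the heterogeneity term $\|\sum_i(\pi_i^h-\frac1N)\nabla F_{p,i}(\boldsymbol w^h)\|$. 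The remaining difference, a $\pi$-weighted average of $\nabla F_{p,i}$ at stale points versus at $\boldsymbol w^h$, is controlled with Assumption~\ref{ass:coasfl_smooth} by $L$ times the drift of the evaluation point.

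The drift has two parts, each bounded via Assumption~\ref{ass:coasfl_moment}. The top-model drift before the $r$-th update is at most $\eta(r-1)\sigma_p$ in expectation. Averaging over $r$ gives $\frac1{R^h}\sum_r (r-1)=\frac{R^h-1}{2}$, hence the term $\frac{L\eta\sigma_p(R^h-1)}{2}$. The bottom-model drift of client $i$ when it produces an activation is at most $\eta(\tau_i^h-1)\sigma_b$, and weighting by $\pi_i^h$ gives $L\eta\sigma_b\sum_i\pi_i^h(\tau_i^h-1)$. Throughout I use Jensen's inequality, $\mathbb E\|\cdot\|\le(\mathbb E\|\cdot\|^2)^{1/2}$, to pass from second moments to norms.

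The bottom block is handled in the same way, with $a=\nabla F_b(\boldsymbol w^h)$ and $b=\sum_i\rho_i^h\,\mathbb E\bar g_{b,i}$, scaled by $\gamma^h$. Reweighting from $\frac1N$ to $\rho_i^h$ gives the first term of $B_b^h$. For the drift of client $i$'s $k$-th bottom gradient, three sources appear:
\begin{itemize}
\item its own bottom drift, at most $\eta(k-1)\sigma_b$;
\item the top-model progress during its own $k-1$ interactions, at most $\eta(k-1)\sigma_p$;
\item the extra top-model advance caused by other clients, $r_{i,k}^h-k$ steps, each of size at most $\eta\sigma_p$.
\end{itemize}
Averaging over $k$, the first two sources give $\frac{(\sigma_b+\sigma_p)(\tau_i^h-1)}{2}$ and the third gives $\sigma_p\zeta_i^h$ by definition~\eqref{eq:progress_gap}. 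Weighting by $\rho_i^h$ and multiplying by $L\eta$ gives the rest of $B_b^h$.

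The main obstacle is this last decomposition: $\nabla F_{b,i}$ depends on the top model through the returned split-layer gradient, so the bottom gradient's dependence on top-model staleness must be made rigorous. I would treat $F_{b,i}$ as a function of the joint state $(\boldsymbol w_b,\boldsymbol w_p)$ with $L$-smoothness jointly, read Assumption~\ref{ass:coasfl_smooth} in that sense, and count the top-model lag as $(k-1)+(r_{i,k}^h-k)=r_{i,k}^h-1$. Summing the two block bounds then yields \eqref{eq:ca_direction}.
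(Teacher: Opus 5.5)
Your proposal is correct and follows essentially the same route as the paper. Both use the block decomposition $\boldsymbol d^h=[\gamma^h\overline{\boldsymbol g}_b^h,R^h\overline{\boldsymbol g}_p^h]$, unbiasedness plus smoothness with drift bounds $\eta\sigma_b(k-1)$ and $\eta\sigma_p(r-1)$, the identity $\frac{1}{\tau_i^h}\sum_k(r_{i,k}^h-1)=\zeta_i^h+\frac{\tau_i^h-1}{2}$ (exactly your split of the top-model lag into $(k-1)+(r_{i,k}^h-k)$), and finally $2\langle\boldsymbol x,\boldsymbol y\rangle\ge\|\boldsymbol x\|^2-\|\boldsymbol x-\boldsymbol y\|^2$; your reading of Assumption~\ref{ass:coasfl_smooth} as joint smoothness of $F_{b,i}$ in $(\boldsymbol w_b,\boldsymbol w_p)$ is what the paper uses implicitly in its bottom-drift bound, so stating it is a clarification rather than a departure.
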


\begin{IEEEproof}
According to Eqs.~\eqref{eq:sfl_bottom_update}, \eqref{eq:top_model_update}, and \eqref{eq:work_bottom_aggregate}, the aggregated update direction can be expressed as follows:
\begin{equation}\label{eq:ca_actual}
\begin{aligned}
\boldsymbol d^h
 &=\big[\gamma^h\overline{\boldsymbol g}_b^h,
 R^h\overline{\boldsymbol g}_p^h\big]\\
\overline{\boldsymbol g}_b^h
 &=\sum_i\frac{\rho_i^h}{\tau_i^h}
 \sum_{k=1}^{\tau_i^h}\widetilde{\boldsymbol g}_{b,i}^{h,k}\\
\overline{\boldsymbol g}_p^h
 &=\frac1{R^h}\sum_{r=1}^{R^h}
 \widetilde{\boldsymbol g}_p^{h,r}
\end{aligned}
\end{equation}
where $\widetilde{\boldsymbol g}_{b,i}^{h,k}$ and $\widetilde{\boldsymbol g}_p^{h,r}$ denote the corresponding stochastic gradients.
Since all clients begin each round with the same bottom model and perform sequential local updates, Assumption~\ref{ass:coasfl_moment} gives the following parameter deviation bounds:
\begin{equation}\label{eq:ca_state_drift}
\begin{aligned}
\mathbb E_h\|\boldsymbol w_{b,i}^{h,k-1}-\boldsymbol w_b^h\|
 &\le\eta\sigma_b(k-1)\\
\mathbb E_h\|\boldsymbol w_p^{h,r-1}-\boldsymbol w_p^{h,0}\|
 &\le\eta\sigma_p(r-1)
\end{aligned}
\end{equation}
According to Assumptions~\ref{ass:coasfl_smooth}--\ref{ass:coasfl_unbiased}, the expected top model gradient deviation can be bounded as follows:
\begin{equation}\label{eq:ca_topdrift}
\begin{aligned}
&\left\|\mathbb E_h[\overline{\boldsymbol g}_p^h]
 -\sum_i\pi_i^h\nabla F_{p,i}(\boldsymbol w^h)\right\|\\
&\quad\le\frac{L\eta}{R^h}\sum_{r=1}^{R^h}
 \left[\frac{\sigma_b}{|S^{h,r}|}
 \sum_{i\in S^{h,r}}(k_i^{h,r}-1)
 +\sigma_p(r-1)\right]\\
&\quad\le L\eta\left[\sigma_b\sum_i\pi_i^h(\tau_i^h-1)
 +\frac{\sigma_p(R^h-1)}2\right]
\end{aligned}
\end{equation}
where $k_i^{h,r}$ denotes the local update index associated with client $i$ in $S^{h,r}$.
Applying the same analysis to the bottom model gradients, we obtain the corresponding deviation bound:
\begin{equation}\label{eq:ca_bottomdrift}
\begin{aligned}
&\left\|\mathbb E_h[\overline{\boldsymbol g}_b^h]
 -\sum_i\rho_i^h\nabla F_{b,i}(\boldsymbol w^h)\right\|\\
&\quad\le L\eta\sum_i\frac{\rho_i^h}{\tau_i^h}
 \sum_{k=1}^{\tau_i^h}
 \left[\sigma_b(k-1)+\sigma_p(r_{i,k}^h-1)\right]
\end{aligned}
\end{equation}
According to Eq.~\eqref{eq:progress_gap}, the average top model update index satisfies:
\begin{equation}\label{eq:ca_gap}
\frac1{\tau_i^h}\sum_{k=1}^{\tau_i^h}(r_{i,k}^h-1)
 =\zeta_i^h+\frac{\tau_i^h-1}{2}
\end{equation}
Combining Eqs.~\eqref{eq:ca_topdrift}--\eqref{eq:ca_gap}, we obtain the following expected gradient deviation bounds:
\begin{equation}\label{eq:ca_biasproof}
\begin{aligned}
\|\mathbb E_h[\overline{\boldsymbol g}_b^h]
 -\nabla F_b(\boldsymbol w^h)\|&\le B_b^h\\
\|\mathbb E_h[\overline{\boldsymbol g}_p^h]
 -\nabla F_p(\boldsymbol w^h)\|&\le B_p^h
\end{aligned}
\end{equation}
Applying the standard inner-product inequality
$2\langle\boldsymbol x,\boldsymbol y\rangle
\ge\|\boldsymbol x\|^2-\|\boldsymbol x-\boldsymbol y\|^2$
to Eq.~\eqref{eq:ca_biasproof} establishes Eq.~\eqref{eq:ca_direction}.
\end{IEEEproof}

\begin{lemma}\label{lem:ca_second}
Under Assumption~\ref{ass:coasfl_moment}, the second moment of the aggregated update satisfies the following upper bound:
\begin{equation}\label{eq:ca_second}
\mathbb E_h\|\boldsymbol d^h\|^2
 \le(\gamma^h)^2\sigma_b^2+(R^h)^2\sigma_p^2
\end{equation}
\end{lemma}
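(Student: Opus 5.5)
We will start from the explicit decomposition of the aggregated update already established in the proof of Lemma~\ref{lem:ca_direction}, namely Eq.~\eqref{eq:ca_actual}: $\boldsymbol d^h=[\gamma^h\overline{\boldsymbol g}_b^h,\,R^h\overline{\boldsymbol g}_p^h]$. Since the bottom and top components occupy disjoint coordinates, the squared norm splits exactly:
\begin{equation*}
\|\boldsymbol d^h\|^2=(\gamma^h)^2\|\overline{\boldsymbol g}_b^h\|^2+(R^h)^2\|\overline{\boldsymbol g}_p^h\|^2 .
\end{equation*}
It then suffices to show $\mathbb E_h\|\overline{\boldsymbol g}_b^h\|^2\le\sigma_b^2$ and $\mathbb E_h\|\overline{\boldsymbol g}_p^h\|^2\le\sigma_p^2$. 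No cross term appears, so no smoothness or unbiasedness is required, consistent with the lemma invoking only Assumption~\ref{ass:coasfl_moment}.

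For each component we will use the fact that it is a convex combination of stochastic gradients, together with Jensen's inequality for the convex function $\|\cdot\|^2$. For the top model, $\overline{\boldsymbol g}_p^h=\frac1{R^h}\sum_{r}\widetilde{\boldsymbol g}_p^{h,r}$ is a uniform average. Each $\widetilde{\boldsymbol g}_p^{h,r}=\widetilde\nabla F_p(\boldsymbol w_p^{h,r-1};G^{h,r})$ is itself an average over the merged activation set $S^{h,r}$, i.e.\ $\frac1{|S^{h,r}|}\sum_{i\in S^{h,r}}\widetilde\nabla F_{p,i}$. Applying Jensen twice gives $\|\overline{\boldsymbol g}_p^h\|^2\le\frac1{R^h}\sum_r\frac1{|S^{h,r}|}\sum_{i\in S^{h,r}}\|\widetilde\nabla F_{p,i}\|^2$. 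Taking $\mathbb E_h$ and applying the second-moment bound $\sigma_p^2$ termwise yields $\sigma_p^2$.

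For the bottom model, $\overline{\boldsymbol g}_b^h=\sum_i\rho_i^h\frac1{\tau_i^h}\sum_k\widetilde{\boldsymbol g}_{b,i}^{h,k}$. The coefficients $\rho_i^h/\tau_i^h$ are nonnegative and sum to $\sum_i\rho_i^h=1$ by Eq.~\eqref{eq:ca_weights}, so the same Jensen step gives $\mathbb E_h\|\overline{\boldsymbol g}_b^h\|^2\le\sigma_b^2$.

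The only delicate point is that the gradients are evaluated at random, history-dependent iterates ($\boldsymbol w_{b,i}^{h,k-1}$, $\boldsymbol w_p^{h,r-1}$), and the schedule (sets $S^{h,r}$, counts $\tau_i^h$) is also random. We will handle this in two steps. First, because $\mathbb E_h$ conditions on the execution schedule, all weights are deterministic inside the expectation. Second, Assumption~\ref{ass:coasfl_moment} holds uniformly in $\boldsymbol w$, so by the tower property each term satisfies $\mathbb E_h\|\widetilde\nabla F(\cdot)\|^2=\mathbb E_h\big[\mathbb E[\|\widetilde\nabla F(\boldsymbol w)\|^2\mid\boldsymbol w]\big]\le\sigma^2$. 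Adding the two weighted bounds then gives Eq.~\eqref{eq:ca_second}.
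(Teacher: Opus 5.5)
Your proposal is correct and matches the paper's proof: split $\|\boldsymbol d^h\|^2$ over the disjoint bottom and top coordinates using Eq.~\eqref{eq:ca_actual}, apply Jensen's inequality to the convex combinations with weights $\rho_i^h/\tau_i^h$ and $1/R^h$, and bound each term by Assumption~\ref{ass:coasfl_moment}. Your extra Jensen step over the merged set $S^{h,r}$ and your tower-property remark (which relies on the paper's stipulation that the schedule is independent of gradient sampling) are minor refinements that justify what the paper applies directly to $\widetilde{\boldsymbol g}_p^{h,r}$.
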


\begin{IEEEproof}
According to Eq.~\eqref{eq:ca_actual}, Jensen's inequality, and Assumption~\ref{ass:coasfl_moment}, the second moment of the aggregated update can be bounded as follows:
\begin{equation}\label{eq:ca_secondproof}
\begin{aligned}
\mathbb E_h\|\boldsymbol d^h\|^2
&=(\gamma^h)^2\mathbb E_h\|\overline{\boldsymbol g}_b^h\|^2
 +(R^h)^2\mathbb E_h\|\overline{\boldsymbol g}_p^h\|^2\\
&\le(\gamma^h)^2\sum_i\frac{\rho_i^h}{\tau_i^h}
 \sum_{k=1}^{\tau_i^h}
 \mathbb E_h\|\widetilde{\boldsymbol g}_{b,i}^{h,k}\|^2\\
&\quad+R^h\sum_{r=1}^{R^h}
 \mathbb E_h\|\widetilde{\boldsymbol g}_p^{h,r}\|^2\\
&\le(\gamma^h)^2\sigma_b^2+(R^h)^2\sigma_p^2
\end{aligned}
\end{equation}
\end{IEEEproof}

\begin{theorem}\label{thm:ca_convergence}
If all bottom models share the same initial parameters in each round and $F$ is lower bounded by $F_{\inf}$, then under Assumptions~\ref{ass:coasfl_smooth}--\ref{ass:coasfl_moment}, the mean-square gradients after $H$ aggregation rounds satisfy the following convergence bound:
\begin{equation}\label{eq:ca_theorem}
\begin{aligned}
&\frac1H\sum_{h=0}^{H-1}\mathbb E\Big[
 \gamma^h\|\nabla F_b(\boldsymbol w^h)\|^2
 +R^h\|\nabla F_p(\boldsymbol w^h)\|^2\Big]\\
&\quad\le\frac{2[F(\boldsymbol w^0)-F_{\inf}]}{\eta H}\\
&\qquad+\frac1H\sum_{h=0}^{H-1}\mathbb E\Big[
 \gamma^h(B_b^h)^2+R^h(B_p^h)^2\Big]\\
&\qquad+\frac{L\eta}{H}\sum_{h=0}^{H-1}\mathbb E\Big[
 (\gamma^h)^2\sigma_b^2+(R^h)^2\sigma_p^2\Big]
\end{aligned}
\end{equation}
\end{theorem}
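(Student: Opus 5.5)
The plan is the standard one-round descent argument for smooth nonconvex objectives, built on Lemmas~\ref{lem:ca_direction} and~\ref{lem:ca_second}, followed by telescoping over the $H$ aggregation rounds. Because every bottom model starts each round from the same parameters $\boldsymbol w_b^h$, the round-level transition is exactly $\boldsymbol w^{h+1}=\boldsymbol w^h-\eta\boldsymbol d^h$, with $\boldsymbol d^h$ as in Eq.~\eqref{eq:ca_actual}. This is the point where the hypothesis on shared initial parameters is used. Applying the $L$-smoothness of $F$ (Assumption~\ref{ass:coasfl_smooth}) then gives the descent inequality
\begin{equation*}
F(\boldsymbol w^{h+1})\le F(\boldsymbol w^h)-\eta\langle\nabla F(\boldsymbol w^h),\boldsymbol d^h\rangle+\frac{L\eta^2}{2}\|\boldsymbol d^h\|^2 .
\end{equation*}

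Next I take the conditional expectation $\mathbb E_h$. The inner-product term is lower bounded by Lemma~\ref{lem:ca_direction}, and the quadratic term is upper bounded by Lemma~\ref{lem:ca_second}. This yields
\begin{equation*}
\mathbb E_h F(\boldsymbol w^{h+1})\le F(\boldsymbol w^h)-\frac{\eta}{2}\Big[\gamma^h\|\nabla F_b\|^2+R^h\|\nabla F_p\|^2\Big]+\frac{\eta}{2}\Big[\gamma^h(B_b^h)^2+R^h(B_p^h)^2\Big]+\frac{L\eta^2}{2}\Big[(\gamma^h)^2\sigma_b^2+(R^h)^2\sigma_p^2\Big].
\end{equation*}
I then rearrange so the weighted gradient norms sit on the left and multiply through by $2/\eta$. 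The bias term then carries coefficient $1$ and the variance term carries coefficient $L\eta$, which matches Eq.~\eqref{eq:ca_theorem}. Here $\gamma^h$, $R^h$, $B_b^h$ and $B_p^h$ are functions of the execution schedule; since $\mathbb E_h$ conditions on that schedule, they can legitimately be treated as constants inside the conditional expectation.

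Finally, I take total expectation via the tower property and sum over $h=0,\dots,H-1$. The terms $F(\boldsymbol w^h)$ telescope to $F(\boldsymbol w^0)-\mathbb E F(\boldsymbol w^H)\le F(\boldsymbol w^0)-F_{\inf}$ by the lower-boundedness assumption. Dividing by $H$ gives the stated bound.

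I expect the main obstacle to be justifying the conditioning. Lemmas~\ref{lem:ca_direction} and~\ref{lem:ca_second} are stated under $\mathbb E_h$, which conditions on both $\boldsymbol w^h$ and the schedule, and the schedule is assumed independent of the gradient sampling. So I first need to check that the tower property still yields $\mathbb E[F(\boldsymbol w^h)]$ after averaging over schedules, which holds because the schedule-dependent quantities stay inside the outer expectation $\mathbb E[\cdot]$ in Eq.~\eqref{eq:ca_theorem}. Beyond that, the remaining work is bookkeeping of the constant factors.
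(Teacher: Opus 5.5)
Your proposal is correct and follows the paper's proof essentially step for step. The paper likewise applies the $L$-smoothness descent inequality to $\boldsymbol w^{h+1}=\boldsymbol w^h-\eta\boldsymbol d^h$, bounds the inner-product term via Lemma~\ref{lem:ca_direction} and the quadratic term via Lemma~\ref{lem:ca_second}, then takes total expectation, telescopes against $F_{\inf}$, and divides by $\eta H/2$; the only cosmetic difference is that you rescale by $2/\eta$ before summing, while the paper sums first.
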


\begin{IEEEproof}
According to Assumption~\ref{ass:coasfl_smooth} and Lemmas~\ref{lem:ca_direction}--\ref{lem:ca_second}, the smoothness descent inequality~\cite{gower2019sgd} gives the following bound on the expected loss decrease:
\begin{equation}\label{eq:ca_descent}
\begin{aligned}
&\mathbb E_h[F(\boldsymbol w^{h+1})-F(\boldsymbol w^h)]\\
&\quad\le-\eta\mathbb E_h\langle\nabla F(\boldsymbol w^h),
 \boldsymbol d^h\rangle
 +\frac{L\eta^2}{2}\mathbb E_h\|\boldsymbol d^h\|^2\\
&\quad\le-\frac\eta2\Big[
 \gamma^h\|\nabla F_b(\boldsymbol w^h)\|^2
 +R^h\|\nabla F_p(\boldsymbol w^h)\|^2\Big]\\
&\qquad+\frac\eta2\Big[
 \gamma^h(B_b^h)^2+R^h(B_p^h)^2\Big]\\
&\qquad+\frac{L\eta^2}{2}\Big[
 (\gamma^h)^2\sigma_b^2+(R^h)^2\sigma_p^2\Big]
\end{aligned}
\end{equation}
Taking the total expectation and summing Eq.~\eqref{eq:ca_descent} over $H$ rounds, we obtain:
\begin{equation}\label{eq:ca_telescope}
\begin{aligned}
&\frac\eta2\sum_{h=0}^{H-1}\mathbb E\Big[
 \gamma^h\|\nabla F_b(\boldsymbol w^h)\|^2
 +R^h\|\nabla F_p(\boldsymbol w^h)\|^2\Big]\\
&\quad\le F(\boldsymbol w^0)-F_{\inf}\\
&\qquad+\frac\eta2\sum_{h=0}^{H-1}\mathbb E\Big[
 \gamma^h(B_b^h)^2+R^h(B_p^h)^2\Big]\\
&\qquad+\frac{L\eta^2}{2}\sum_{h=0}^{H-1}\mathbb E\Big[
 (\gamma^h)^2\sigma_b^2+(R^h)^2\sigma_p^2\Big]
\end{aligned}
\end{equation}
Dividing both sides of Eq.~\eqref{eq:ca_telescope} by $\eta H/2$ establishes the convergence bound in Eq.~\eqref{eq:ca_theorem}.
\end{IEEEproof}
According to Theorem~\ref{thm:ca_convergence}, the gradient deviations $B_b^h$ and $B_p^h$ influence the convergence bound.
Let $\boldsymbol\mu_{0,c}^h$ and $\boldsymbol\mu_{S,c}^{h,r}$ denote the expected class-$c$ top model gradients under the balanced client distribution and selected activation set $S^{h,r}$, respectively.
We define the corresponding conditional gradient deviation and reference gradient bound as follows:
\begin{equation}\label{eq:ca_cond}
\begin{aligned}
G_{\rm cls}^h
 &=\max_c\|\boldsymbol\mu_{0,c}^h\|\\
\Delta_{\rm cond}^{h,r}
 &=\sum_{c=1}^{M}\Phi_{S^{h,r}}(c)
 \|\boldsymbol\mu_{S,c}^{h,r}
 -\boldsymbol\mu_{0,c}^h\|
\end{aligned}
\end{equation}
According to Pinsker's inequality, the label-dependent top model gradient deviation satisfies the following bound:
\begin{equation}\label{eq:ca_kl}
\begin{aligned}
&\left\|
\frac1{|S^{h,r}|}\sum_{i\in S^{h,r}}
 \nabla F_{p,i}(\boldsymbol w^h)
 -\nabla F_p(\boldsymbol w^h)
\right\|\\
&\quad\le G_{\rm cls}^h
 \sqrt{2D_{\rm KL}(\Phi_{S^{h,r}}\|\Phi_0)}
 +\Delta_{\rm cond}^{h,r}
\end{aligned}
\end{equation}
Hence, the KL-divergence constraint limits the label-dependent component of the server-side gradient deviation for candidate combinations satisfying Eq.~\eqref{eq:valid_combination_constraints}.
For timeout-triggered fallback combinations, Eq.~\eqref{eq:ca_kl} remains applicable with their actual KL divergence.


For the bottom model, substituting Eqs.~\eqref{eq:aggregation_correction_factor}--\eqref{eq:configured_aggregation_weight} into Eq.~\eqref{eq:ca_weights}, the effective gradient contribution is proportional to
$\rho_i^h\propto \frac{(\tau_i^h)^2}{\tau_i^h+1}\frac{1}{1+\zeta_i^h/R^h}$.
Let $a_i^h$ denote the normalized weights proportional to $(\tau_i^h)^2/(\tau_i^h+1)$, and let $t_i^h=(1+\zeta_i^h/R^h)^{-1}$.
Since $\rho_i^h=a_i^ht_i^h/\sum_j a_j^ht_j^h$, the difference in weighted progress gaps can be expressed as follows:
\begin{equation}\label{eq:ca_gapcontrol}
\begin{aligned}
&\sum_i a_i^h\zeta_i^h
-\sum_i\rho_i^h\zeta_i^h\\
&\quad=
\frac{\sum_{i,j}a_i^ha_j^h
 (\zeta_i^h-\zeta_j^h)^2t_i^ht_j^h}
 {2R^h\sum_j a_j^ht_j^h}
\ge0
\end{aligned}
\end{equation}
According to Eqs.~\eqref{eq:ca_kl} and~\eqref{eq:ca_gapcontrol}, the activation selection and aggregation weights control different sources of gradient deviation.
The former bounds the label-dependent deviation of top model updates, while the latter reduces the weighted progress mismatch of bottom model updates.
Consequently, the convergence bound reflects the combined effects of client contribution imbalance and asynchronous model progress on the overall optimization.

\section{Evaluation}\label{sec:evaluation}


\subsection{Experimental Settings}
\textbf{System Deployment.} 
We conducted a series of comprehensive experiments on an edge computing hardware prototype system. 
For the PS, we employed a deep learning GPU workstation equipped with an Intel(R) Xeon(R) Platinum 8360Y CPU, four NVIDIA A100 80GB PCIe, and 503 GiB of RAM. 
To create a heterogeneous computing environment, we incorporated 80 NVIDIA Jetson devices, consisting of 30 Jetson TX2, 40 Jetson NX, and 10 Jetson AGX units. 
The detailed technical specifications of the Jetson TX2, NX, and AGX models are provided in Table \ref{tab:jetson}. 
The experimental setup utilized a software platform based on Docker Swarm \cite{naik2016building} and the PyTorch deep learning framework\cite{paszke2019pytorch}. 
Docker Swarm, a distributed systems orchestration tool, facilitates the deployment and management of a multi-node architecture, enabling seamless monitoring and coordination of the devices’ operational states. 
PyTorch, a widely adopted deep learning framework, was used for model training across the distributed hardware nodes. 
To optimize inter-device communication, we integrated the Message Passing Interface (MPI) \cite{gabriel2004open}, a high-performance communication protocol that provides a robust set of functions for efficient data transmission and synchronization across the system. 
This combination of hardware and software enabled the evaluation of \method's scalability and performance in a heterogeneous edge computing environment.


\begin{table}[!t]
\caption{Device technical specifications.}
\centering
\resizebox{\linewidth}{!}{
\begin{tabular}{lcc}
\hline
& \textbf{AI Performance} & \textbf{GPU Type}  \\ 
\hline
Jetson TX2 & 6 TOPs & 256-core Pascal \\ 
Jetson NX  & 21 TOPs & 384-core Volta\\ 
Jetson AGX  & 32 TOPs & 512-core Volta\\ \hline \hline
& \textbf{CPU Type} & \textbf{RAM} \\  \hline
Jetson TX2 & Denver 2 and ARM 4 & 8 GB LPDDR4\\ 
Jetson NX & 6-core Carmel ARM 8 & 8 GB LPDDR4x\\ 
Jetson AGX  & 8-core Carmel ARM 8 & 32 GB LPDDR4x \\  \hline \hline
& \textbf{CPU Frequency} & \textbf{GPU Frequency} \\  \hline
Jetson TX2 & 2.0GHz & 1.12GHz\\ \hline
Jetson NX & 1.9GHz & 1.1GHz\\ \hline
Jetson AGX & 2.2GHz & 1.37GHz\\ \hline
\hline
\end{tabular}
}
\label{tab:jetson}
\vspace{-3mm}
\end{table}

\textbf{Setting of System Heterogeneity.} 
The following experimental setup is employed to enable clients with heterogeneous computational and communication capabilities.

1) \textbf{\textit{Computation Setting.}} 
The Jetson TX2, NX, and AGX devices are configurable to operate in distinct modes (4 for TX2, 8 for NX/AGX), each specifying a particular number of active CPU cores and CPU/GPU frequencies. 
These modes enable devices to exhibit different computational performance.
For instance, the AGX in its highest-performance mode (Mode 0) can achieve training speeds up to 100$\times$ faster than the TX2 in its lowest-performance mode (Mode 1). 
Furthermore, to simulate time-varying resource availability, the device modes are randomly altered every 10 aggregation rounds.

2) \textbf{\textit{Communication Setting.}} 
Devices are connected to the PS via Wi-Fi routers, with a total of 80 devices grouped into four groups of 20 devices each. 
These groups are positioned at varying distances from the Wi-Fi routers, specifically at 2m, 8m, 14m, and 20m. 
Due to random channel interference and contention between devices, the bandwidth between the PS and the devices fluctuates during training, ranging from 1 Mbps to 30 Mbps, as measured by iperf3.

\textbf{Applications and Models.} 
To evaluate the performance of the \method in real-world scenarios, we conduct experiments on three representative applications using pre-trained transformer models of diverse scales:




1) \textbf{\textit{Object Recognition.}} 
CIFAR-100 \cite{krizhevsky2009learning} is a widely used benchmark dataset for object recognition, which consists of 60,000 32$\times$32 colour images (50,000 for training and 10,000 for test) across 100 fine-grained classes. 
We utilize a well-known vision transformer model, Deit-Base \cite{touvron2022deit}, which has a size of 475MB and comprises 12 transformer layers that process images through 16$\times$16 patch embeddings. 

2) \textbf{\textit{Natural Language Processing.}}
We utilize the Question-answering Natural Language Inference (QNLI) dataset \cite{wang2018glue}, a classification dataset consisting of question-sentence pairs, in this application.
There are 104,473 samples for training and 5,463 samples for testing in the QNLI dataset.
We train a RoBERTa-Base\cite{liu2019roberta}  model with a size of 501MB on the QNLI dataset, which is composed of 12 transformer layers.

3) \textbf{\textit{Text Classification.}}
We adopt the 20 Newsgroups dataset \cite{lang1995newsweeder}, a classical text classification benchmark containing 18,846 documents (11,314 for training and 7,532 for testing) across 20 categories. 
For this task, we employ the Qwen2.5-0.5B \cite{hui2024qwen2}, a size of 1,884MB LLM with 24 transformer layers.

\textbf{Setting of Statistical Heterogeneity.} In the experiments, training samples of each client are drawn from a Dirichlet distribution\cite{yurochkin2019bayesian}, \ie, $v\sim Dir(\delta\psi)$, where $\psi$ characterizes a prior class distribution, and $\delta>0$ is a concentration parameter controlling the identicalness among clients. With $\delta\to\infty$, all clients have identical distributions to the prior class distribution (\ie, IID); with $\delta\to 0$ each client holds data samples from only one class, which indicates high degree of statistical heterogeneity. We specify 6 values (\eg,$\infty$,1,0.5,0.25,0.2,0.1) for $\delta$ to generate different data distributions that cover a spectrum of identicalness, and define $p=\frac{1}{\delta}$(\eg, $p$=0,1,2,4,5,10) to quantify the non-IID levels. The degree of statistical heterogeneity increases as $p=0$ is a special case of IIDness.

\textbf{Baselines.} 
We measure the effectiveness of \method through a comparison with four baselines.
All the asynchronous baselines adopt an intra-round asynchronous and inter-round synchronous training strategy, which is the same as \method.

1) \textbf{PipeSFL} \cite{gao2024pipesfl} addresses the challenge of system heterogeneity in computation and communication during typical SFL by proposing a novel pipeline asynchronous framework that accelerates SFL execution across heterogeneous devices.

2) \textbf{GAS} \cite{yang2025gas} represents the state-of-the-art in asynchronous SFL approaches, which addresses statistical heterogeneity challenges through an innovative approach of sampling and generating intermediate activations.

3) \textbf{SplitFed} \cite{thapa2022splitfed} is the first SFL approach, which combines FL with SL by utilizing parallel client-side computing to accelerate model training in a synchronous setting.

\begin{figure*}[t]
    \centering
    \subfloat[CIFAR-100\label{fig:time_acc_iid_deit_cifar}]{
        \includegraphics[height=4.2cm]{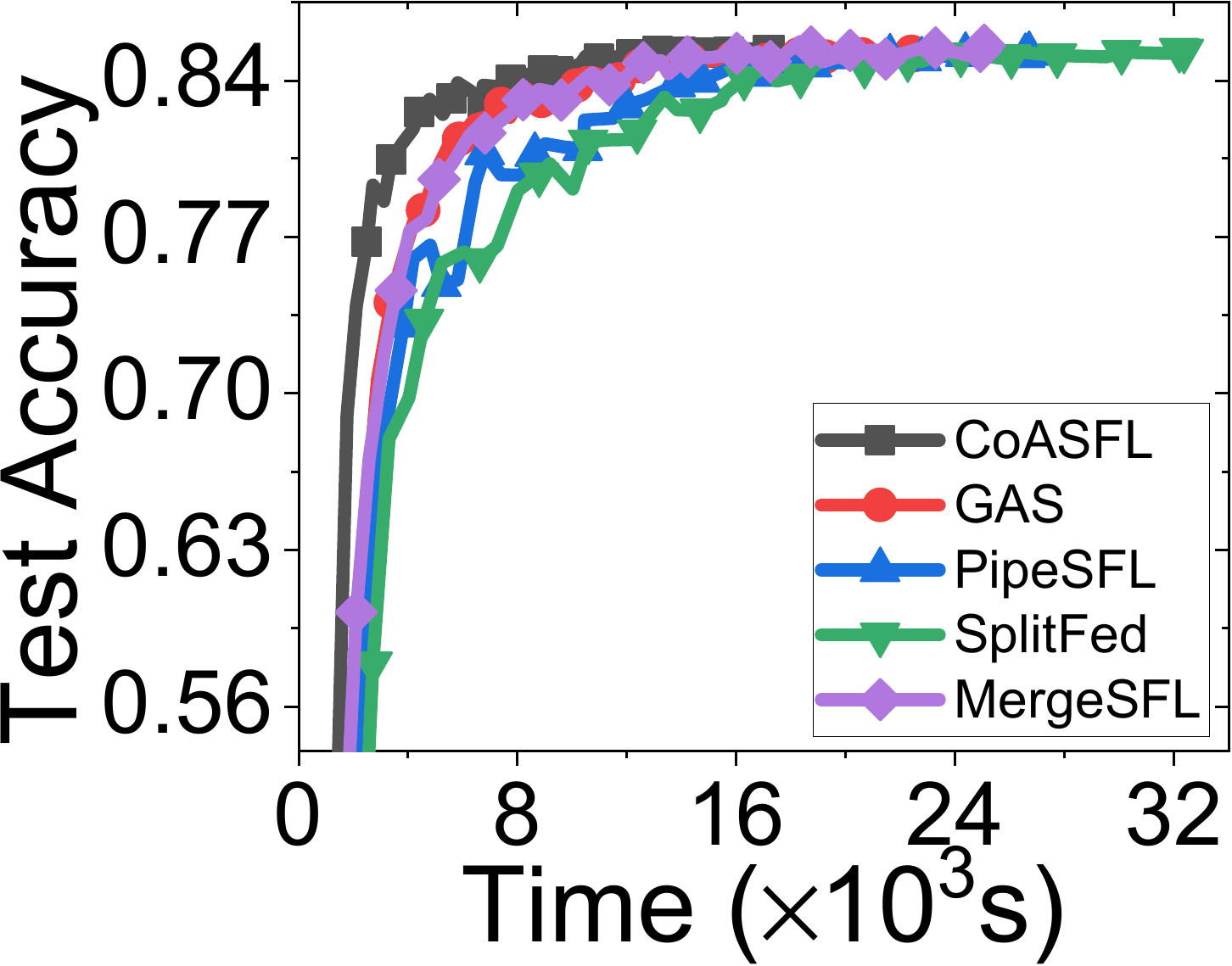}
    }
    \hfill
    \subfloat[QNLI\label{fig:time_acc_iid_roberta_qnli}]{
        \includegraphics[height=4.2cm]{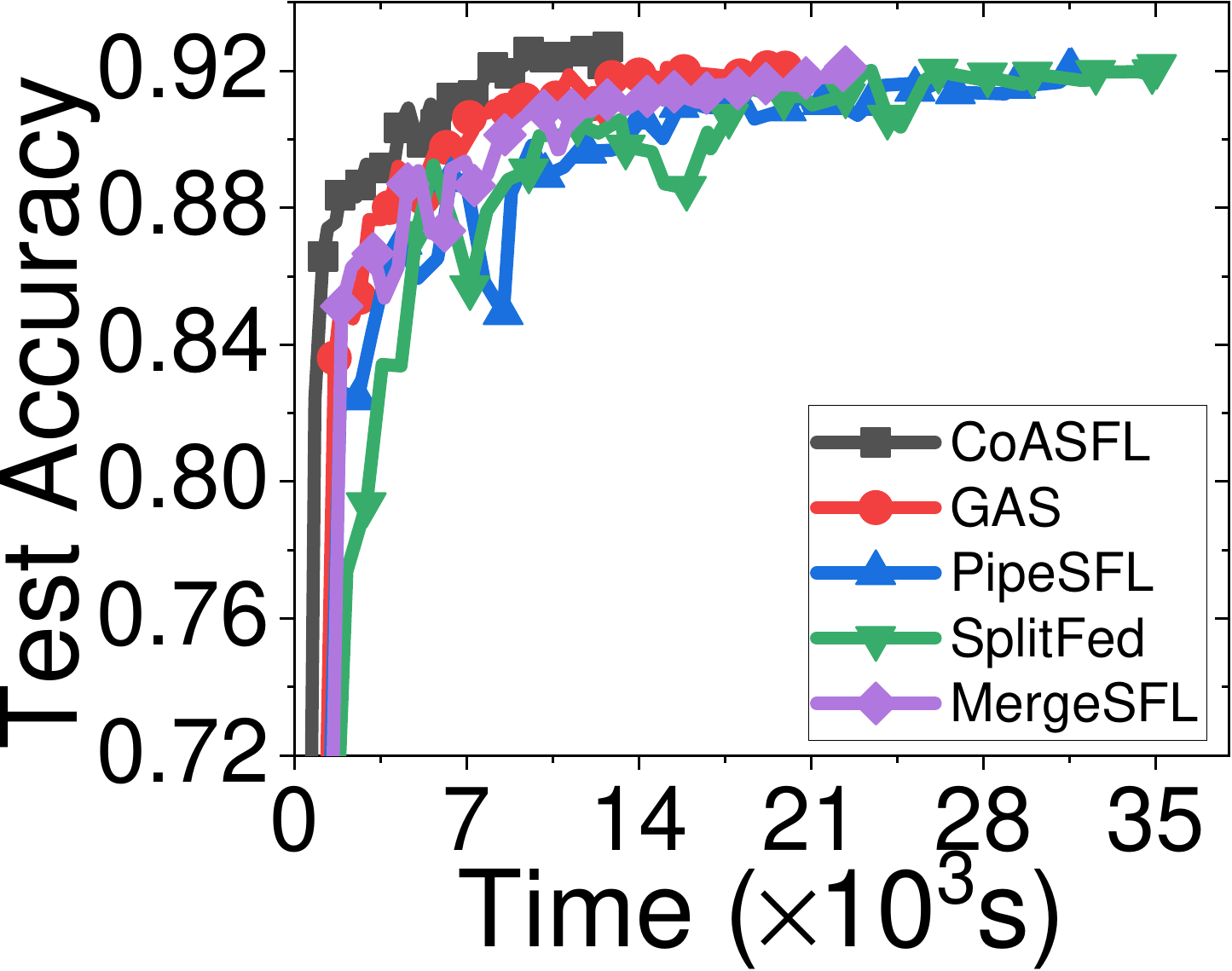}
    }
    \hfill
    \subfloat[Newsgroups\label{fig:time_acc_iid_qwen_newsgroup}]{
        \includegraphics[height=4.2cm]{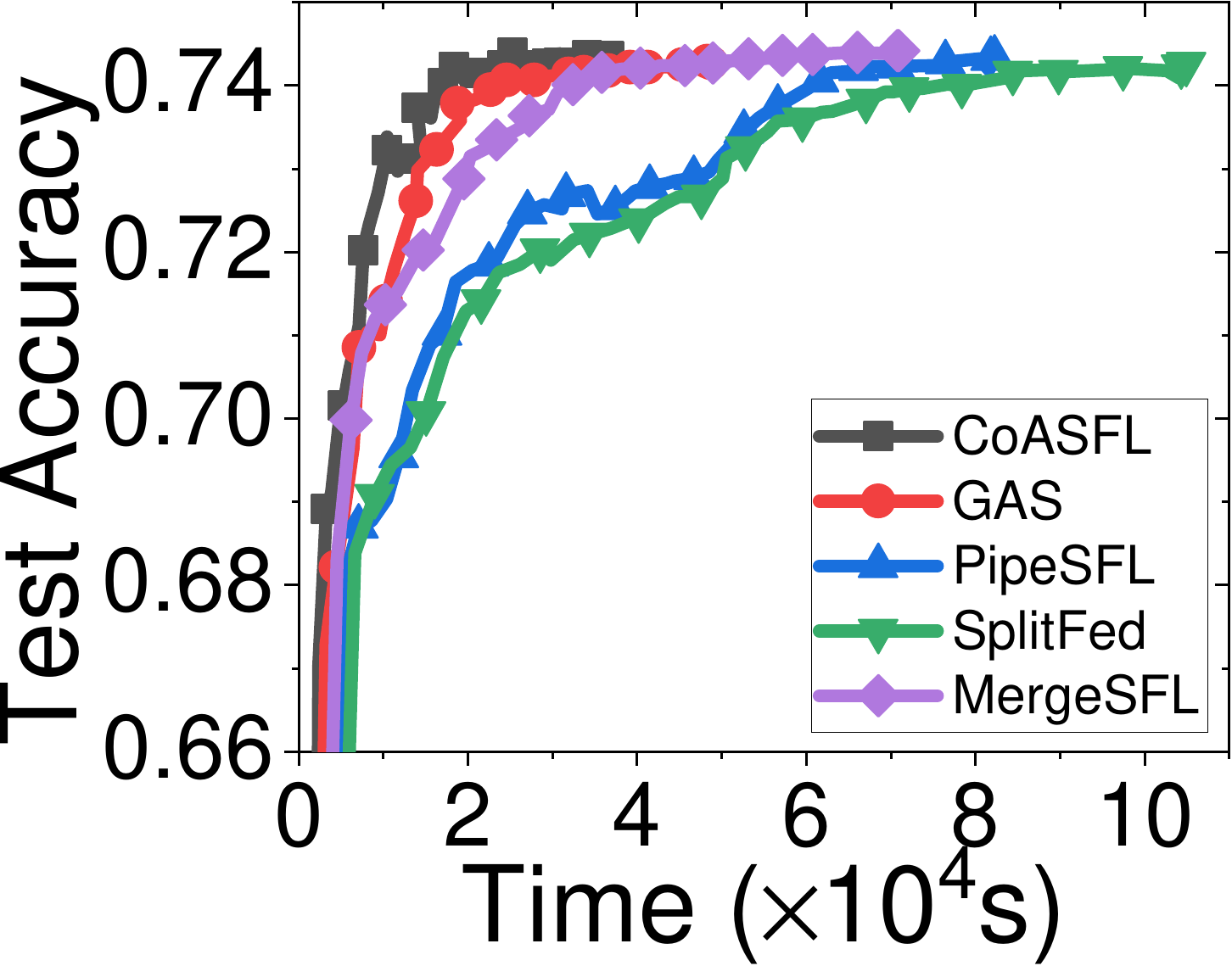}
    }
    \caption{Test accuracy on the three IID datasets}
    \label{fig:time_acc_iid}
\end{figure*}


\begin{figure*}[t]
    \centering
    \subfloat[CIFAR-100\label{fig:time_acc_noniid_deit_cifar}]{
        \includegraphics[height=4.2cm]{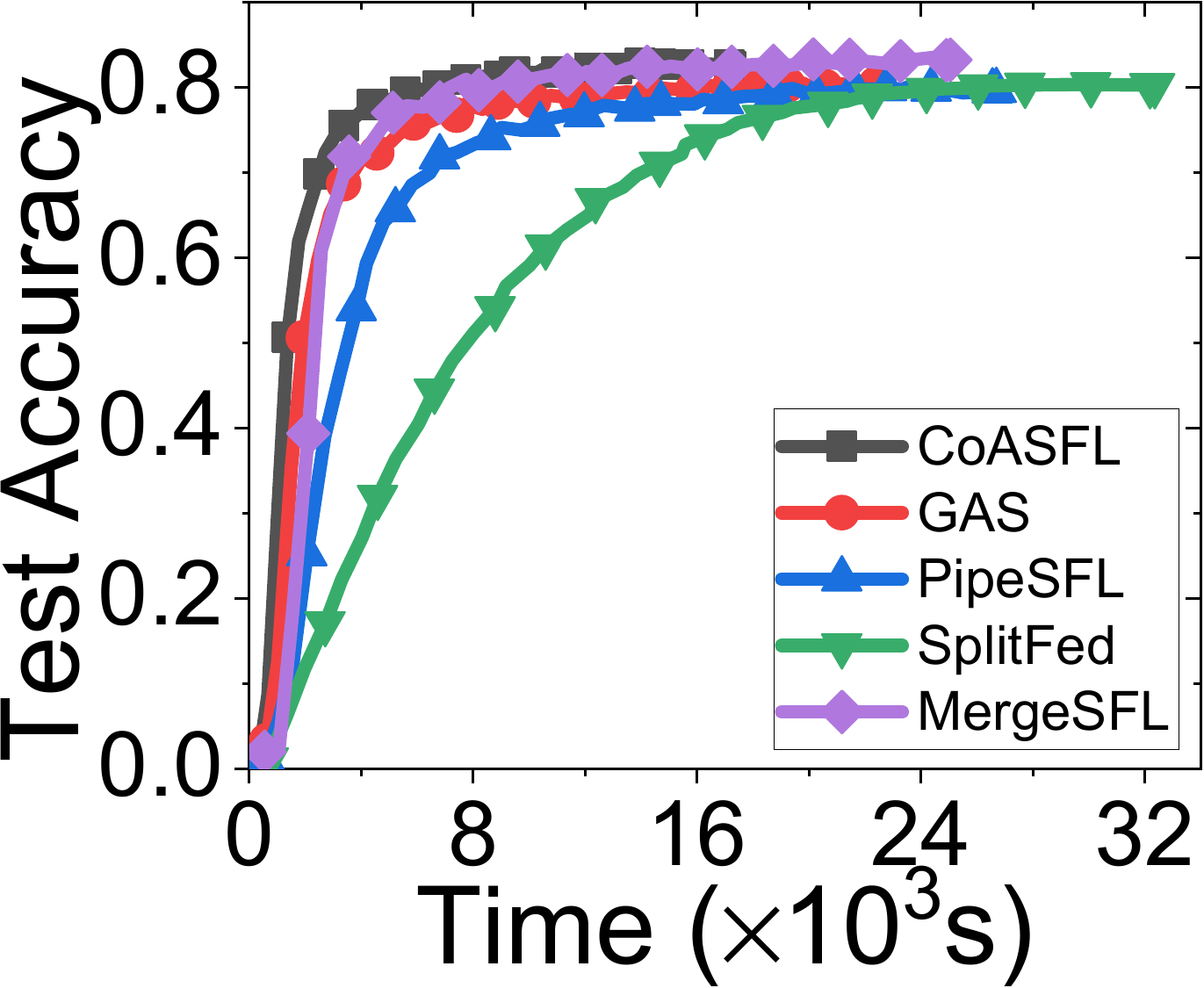}
    }
    \hfill
    \subfloat[QNLI\label{fig:time_acc_noniid_roberta_qnli}]{
        \includegraphics[height=4.2cm]{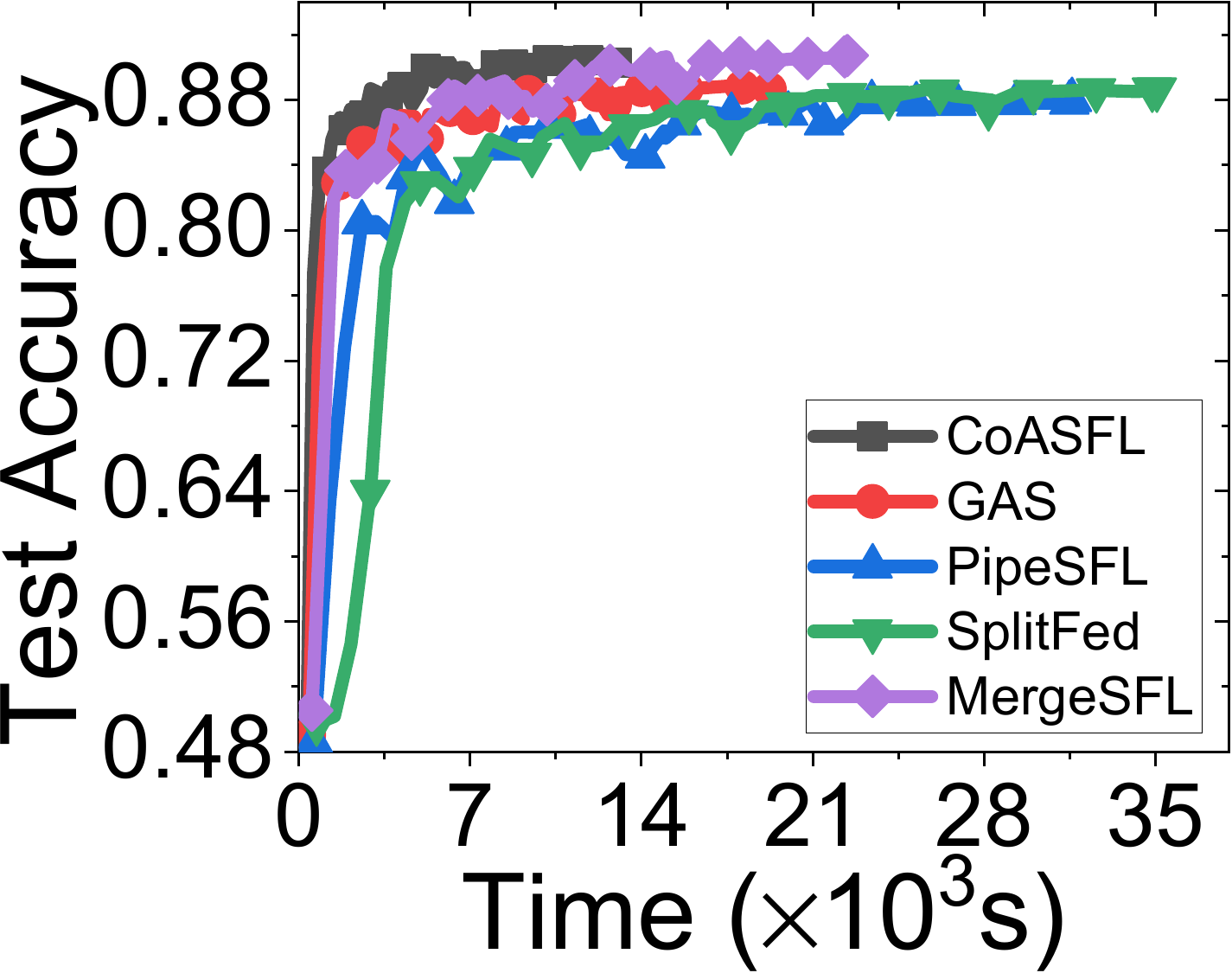}
    }
    \hfill
    \subfloat[Newsgroups\label{fig:time_acc_noniid_qwen_newsgroup}]{
        \includegraphics[height=4.2cm]{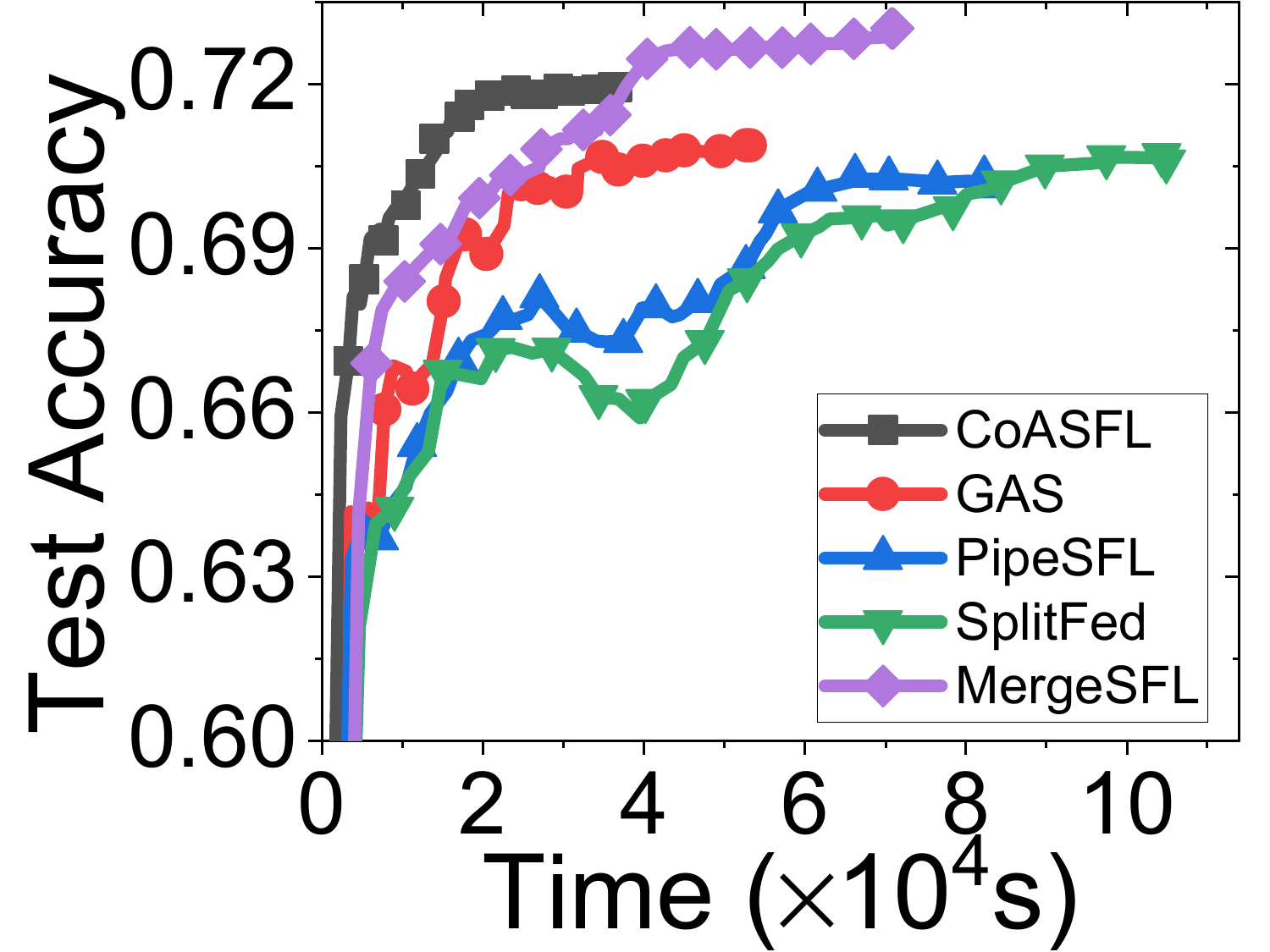}
    }
    \caption{Test accuracy on the three non-IID datasets}
    \label{fig:time_acc_noniid}
\end{figure*}


4) \textbf{MergeSFL} \cite{10597905} is the state-of-the-art approach in synchronous SFL,  introducing the activation merging mechanism with client selection to address system and statistical heterogeneity simultaneously.

\textbf{Metrics.} 
We adopt the following evaluation metrics:

1) \textbf{Test Accuracy} measures the global model's predictive performance on the test set across training rounds, reporting the final converged accuracy.

2) \textbf{Time-to-Accuracy} measures the wall-clock time to reach a target accuracy achievable by all approaches. Clients' average waiting time is also recorded to reflect training efficiency. 

3) \textbf{Network Traffic} is calculated by summing the traffic for transmitting models or activations between the PS and clients when achieving a target accuracy.


\textbf{Experimental Parameters.}
All experiments are run for 100 rounds with an average local update frequency of 20.
For DeiT on CIFAR-100, we adopt a learning rate of 0.1, a decay of 0.993~\cite{gower2019sgd,shi2024heterogeneous}, and a batch size of 64.
Conversely, for RoBERTa on QNLI and Qwen2.5 on Newsgroups, we employ a learning rate of 5e-5, a linear scheduler with 10\% warm-up~\cite{loshchilov2017decoupled}, a batch size of 16, and a sequence length of 128.
For the SFL approaches, DeiT and RoBERTa are split at the 6th layer, while Qwen2.5 is split at the 3rd layer.


\subsection{Overall Performance}

Firstly, we conduct a set of experiments on the IID datasets to evaluate the convergence performance of the \method and the baselines.
The overall training processes of these approaches are presented in Fig. \ref{fig:time_acc_iid}.
According to the results, all the approaches achieve a similar test accuracy on the three datasets eventually.
However, the \method achieves the fastest convergence, outperforming the other approaches on all three datasets.
For instance, as illustrated in Fig. \ref{fig:time_acc_iid_deit_cifar}, \method takes only 10,493s to achieve 85\% accuracy for DeiT on CIFAR-100, while GAS, PipeSFL, SplitFed, and MergeSFL consume 15,852s, 17,774s, 21,369s, and 14,210s, respectively.
Similarly, as shown in Fig. \ref{fig:time_acc_iid_roberta_qnli}, for RoBERTa on QNLI, \method can separately speed up training by about 1.61$\times$, 2.81$\times$, 3.74$\times$, and 2.21$\times$, compared to GAS, PipeSFL, SplitFed, and MergeSFL, respectively.
Moreover, for Qwen2.5 on Newsgroups, as shown in Fig. \ref{fig:time_acc_iid_qwen_newsgroup}, \method reduces the total training time by about 17.6\%, 55.9\%, 65.4\%, and 49.1\%, compared to the four baselines.

\begin{figure*}[t]
    \centering
    \subfloat[CIFAR-100\label{fig:deit_cifar_acc_traffic_iid}]{
        \includegraphics[height=4.2cm]{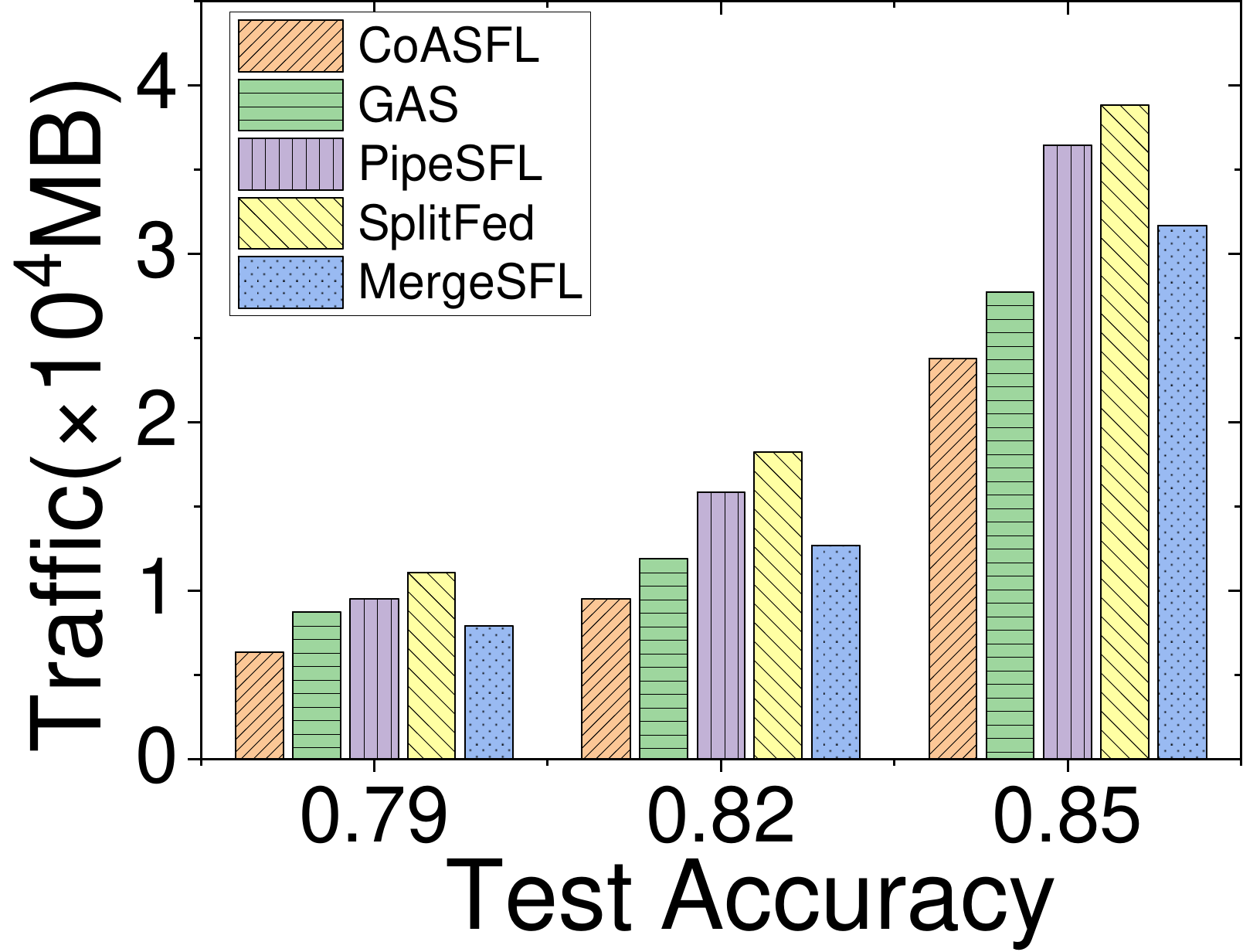}
    }
    \hfill
    \subfloat[QNLI\label{fig:roberta_qnli_acc_traffic_iid}]{
        \includegraphics[height=4.2cm]{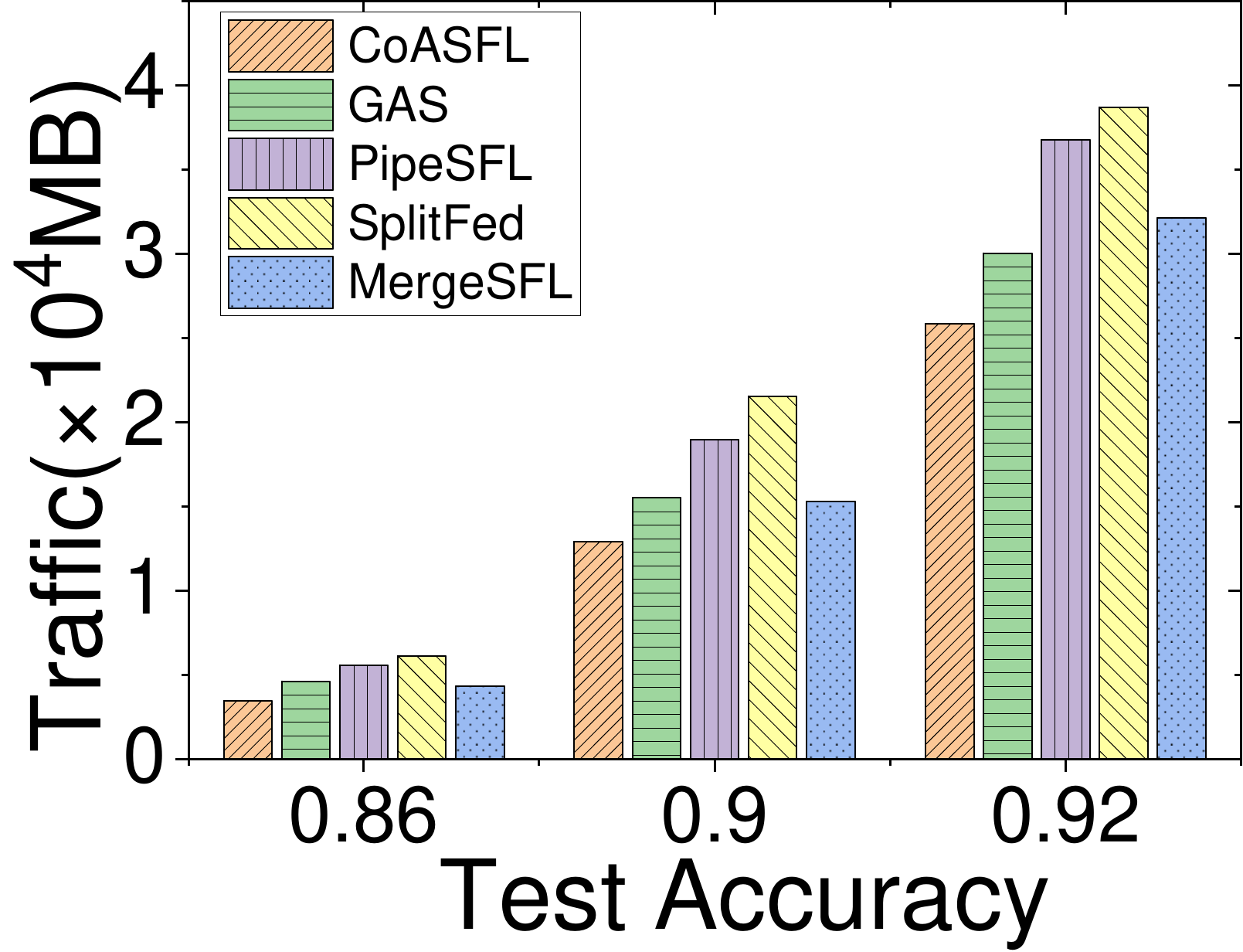}
    }
    \hfill
    \subfloat[Newsgroups\label{fig:qwen_newsgroup_acc_traffic_iid}]{
        \includegraphics[height=4.2cm]{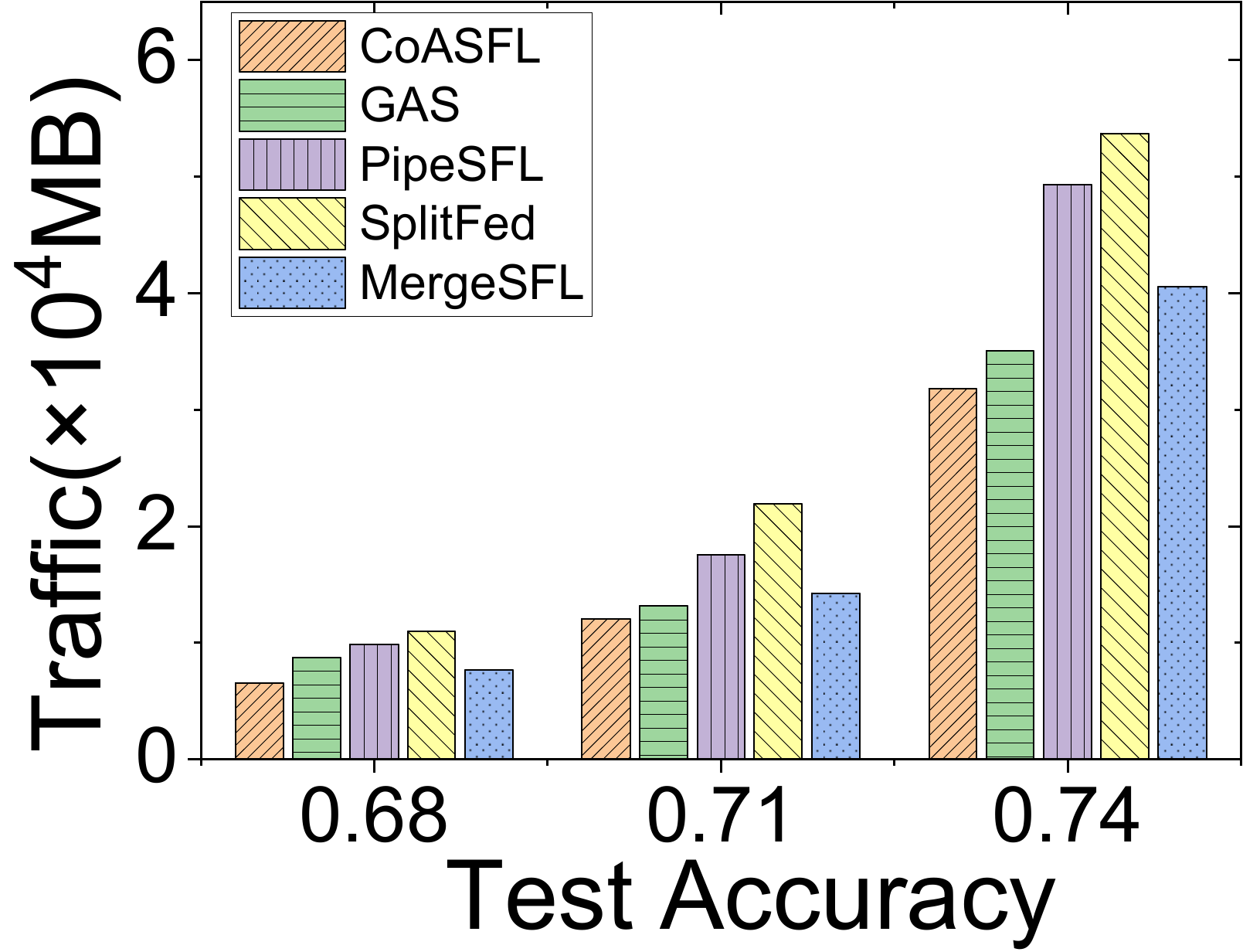}
    }
    \caption{Traffic consumption to achieve target accuracies}
    \label{fig:acc_traffic_iid}
    \vspace{-1mm}

\end{figure*}

\begin{figure*}[t]
    \centering
    \subfloat[CIFAR-100\label{fig:waiting_time_cifar}]{
        \includegraphics[height=4.2cm]{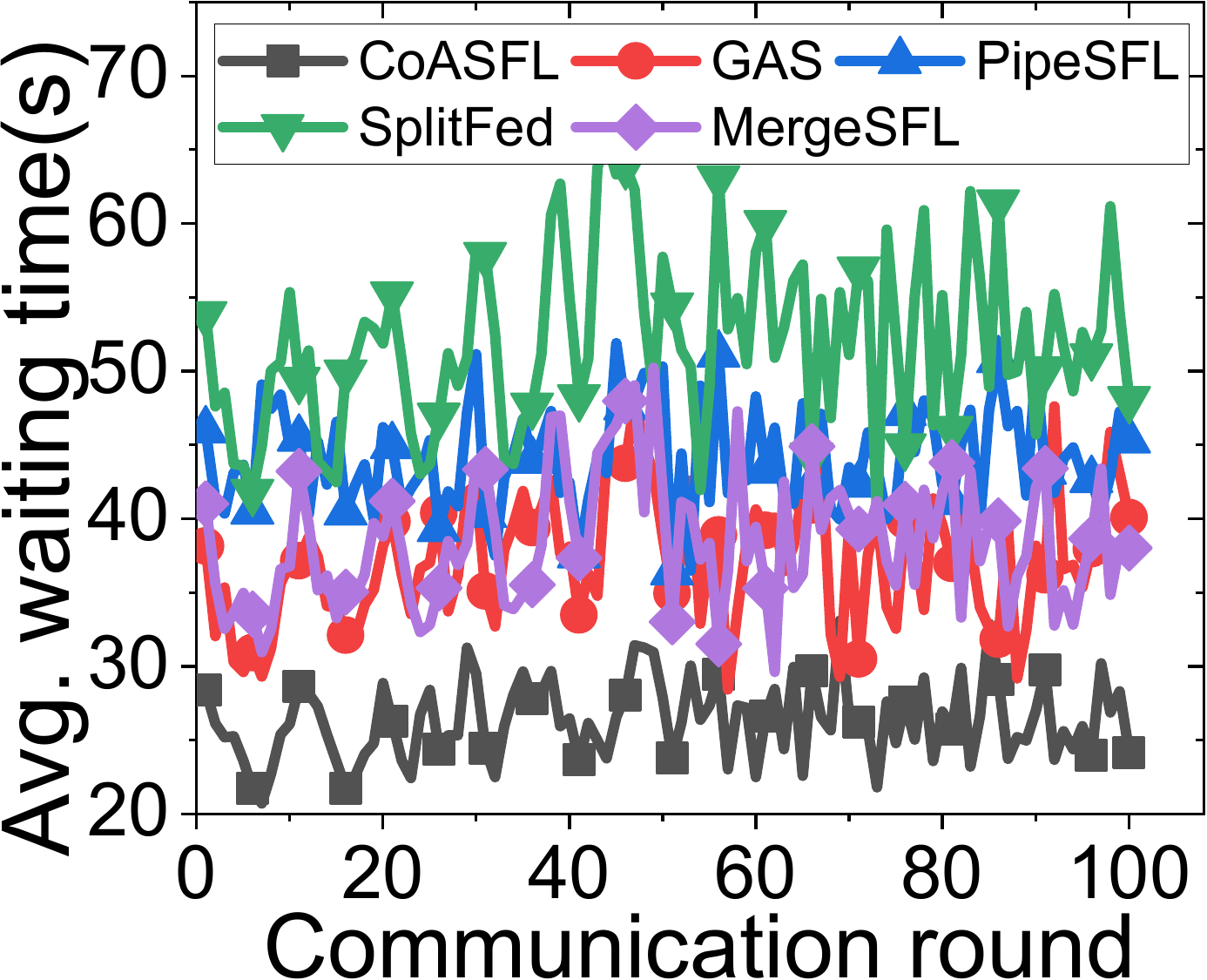}
    }
    \hfill
    \subfloat[QNLI\label{fig:waiting_time_qnli}]{
        \includegraphics[height=4.2cm]{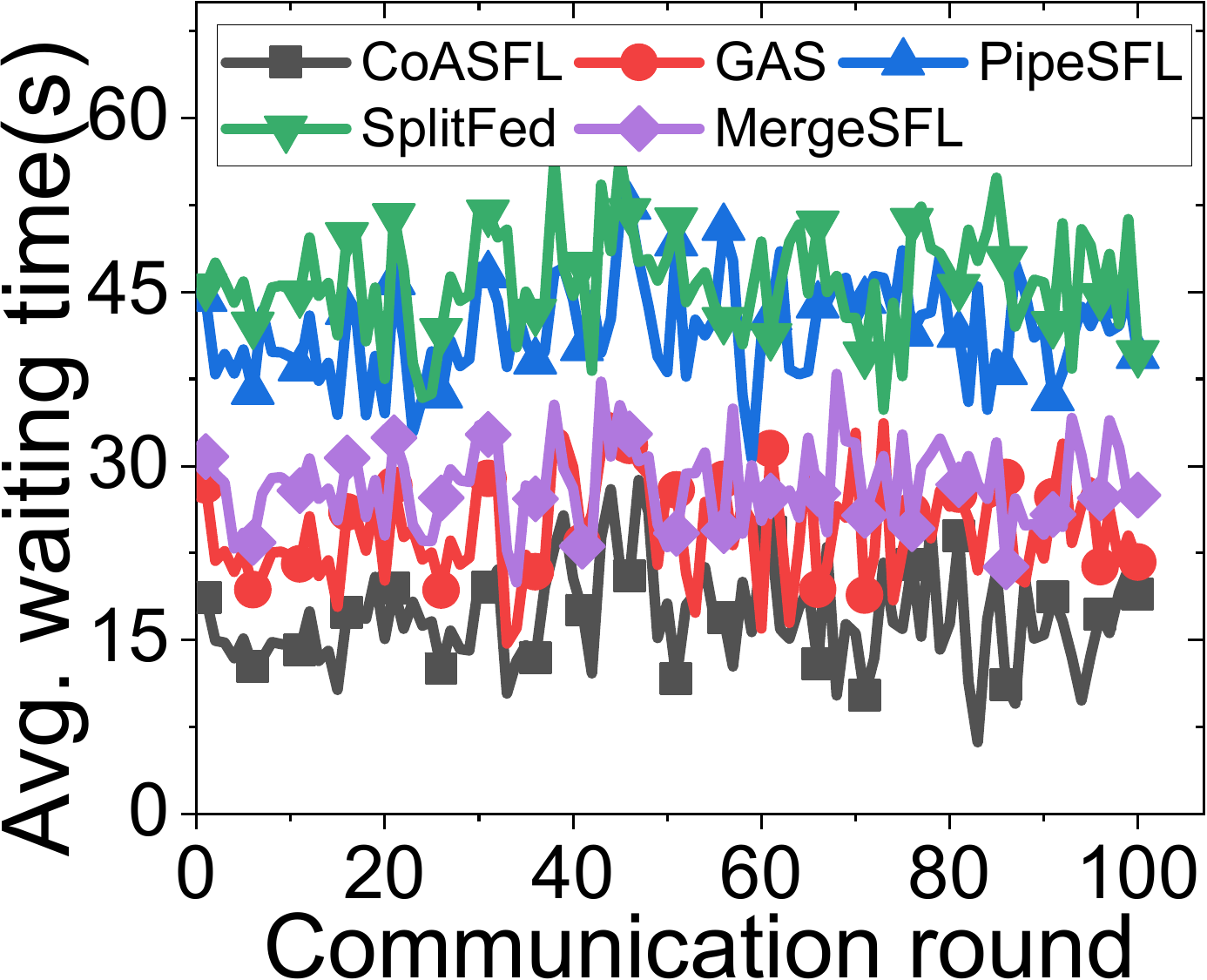}
    }
    \hfill
    \subfloat[Newsgroups\label{fig:waiting_time_newsgroups}]{
        \includegraphics[height=4.2cm]{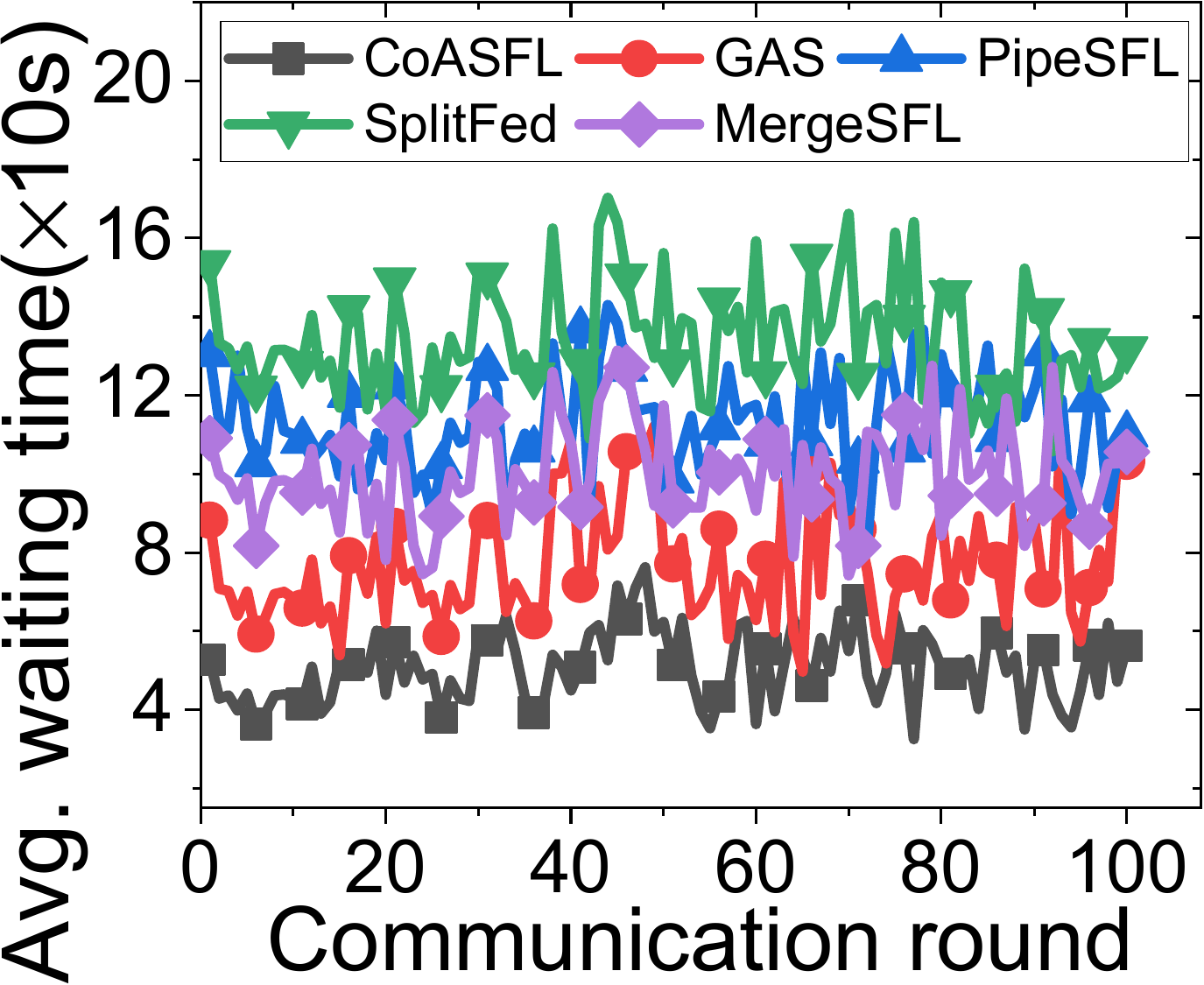}
    }
    \caption{Average waiting time on the three datasets}
    \label{fig:waiting_time}
    \vspace{-1mm}
\end{figure*}


Secondly, we also conduct a set of experiments of these approaches on the three datasets with non-IID level $p$=10, and the results are presented in Fig. \ref{fig:time_acc_noniid}. 
We observe that all the approaches suffer from varying degrees of performance degradation, especially the SplitFed.
However, \method maintains a similar convergence rate as in the IID scenario, and achieves the best performance among these approaches.
For instance, by Fig. \ref{fig:time_acc_noniid_deit_cifar}, \method achieves 82.36\% accuracy in 12,553s for DeiT on CIFAR-100, while GAS, PipeSFL, SplitFed, and MergeSFL take 18,283s, 23,388s, 29,518s, and 24,849s to reach the accuracy of 80.3\%, 79.1\%, 80.6\%, and 83.51\%, respectively.
Similarly, as shown in Fig. \ref{fig:time_acc_noniid_roberta_qnli}, for RoBERTa on QNLI with the same training time of 10,187s, \method improves the test accuracy by about 2.04\%, 4.68\%, 4.96\%, and 1.55\%, compared to GAS, PipeSFL, SplitFed, and MergeSFL, respectively.
In addition, \method with dynamic activation merging and asynchronous training tackles both statistical heterogeneity and client drift better than all the baselines.
Specifically, Fig. \ref{fig:time_acc_noniid_qwen_newsgroup} illustrates that \method separately improves the final accuracy by about 1.18\%,  1.62\%, and 1.31\% compared to the GAS, PipeSFL, and SplitFed, respectively.
Furthermore, \method speeds up training by about 1.96$\times$ compared to MergeSFL.
Besides, \method takes 13,262s to achieve 70\% accuracy for Qwen2.5 on Newsgroups, reducing the training time by about 50.6\%, 79.8\%, 81.1\%, and 67.3\% compared to GAS, PipeSFL, SplitFed, and MergeSFL, respectively.
These results demonstrate that \method is effective in simultaneously tackling the system and statistical heterogeneity.

Furthermore, to illustrate the advantage of the \method in saving communication resources, we present the network traffic consumption of these approaches when achieving different target accuracies in Fig. \ref{fig:acc_traffic_iid}.
According to the results, the network traffic consumption of all approaches increases with the target accuracy for all three datasets.
Furthermore, \method always consumes the least network traffic among all approaches.
Specifically, as shown in Fig. \ref{fig:deit_cifar_acc_traffic_iid}, when achieving 85\% accuracy, \method, GAS, MergeSFL consume 23,774MB, 27,736MB, and 31,699MB respectively, while PipeSFL, SplitFed consume 36,541MB and 38,831MB for DeiT on CIFAR-100.
By Fig. \ref{fig:roberta_qnli_acc_traffic_iid}, for training RoBERTa on QNLI to achieve 92\% accuracy, \method reduces the network traffic consumption by about 4,114MB, 11,066MB, 12,996MB, and 6,219MB, compared to GAS, PipeSFL, SplitFed, and MergeSFL, respectively.
Besides, as illustrated in Fig. \ref{fig:qwen_newsgroup_acc_traffic_iid}, \method saves network traffic consumption by about 9.37\%, 35.55\%, 40.81\%, and 21.62\% when achieving 74\% accuracy for Qwen2.5 on Newsgroups.

\begin{figure*}[t]
    \centering
    \subfloat[CIFAR-100\label{fig:noniid_level_acc_deit_cifar}]{
        \includegraphics[height=4.2cm]{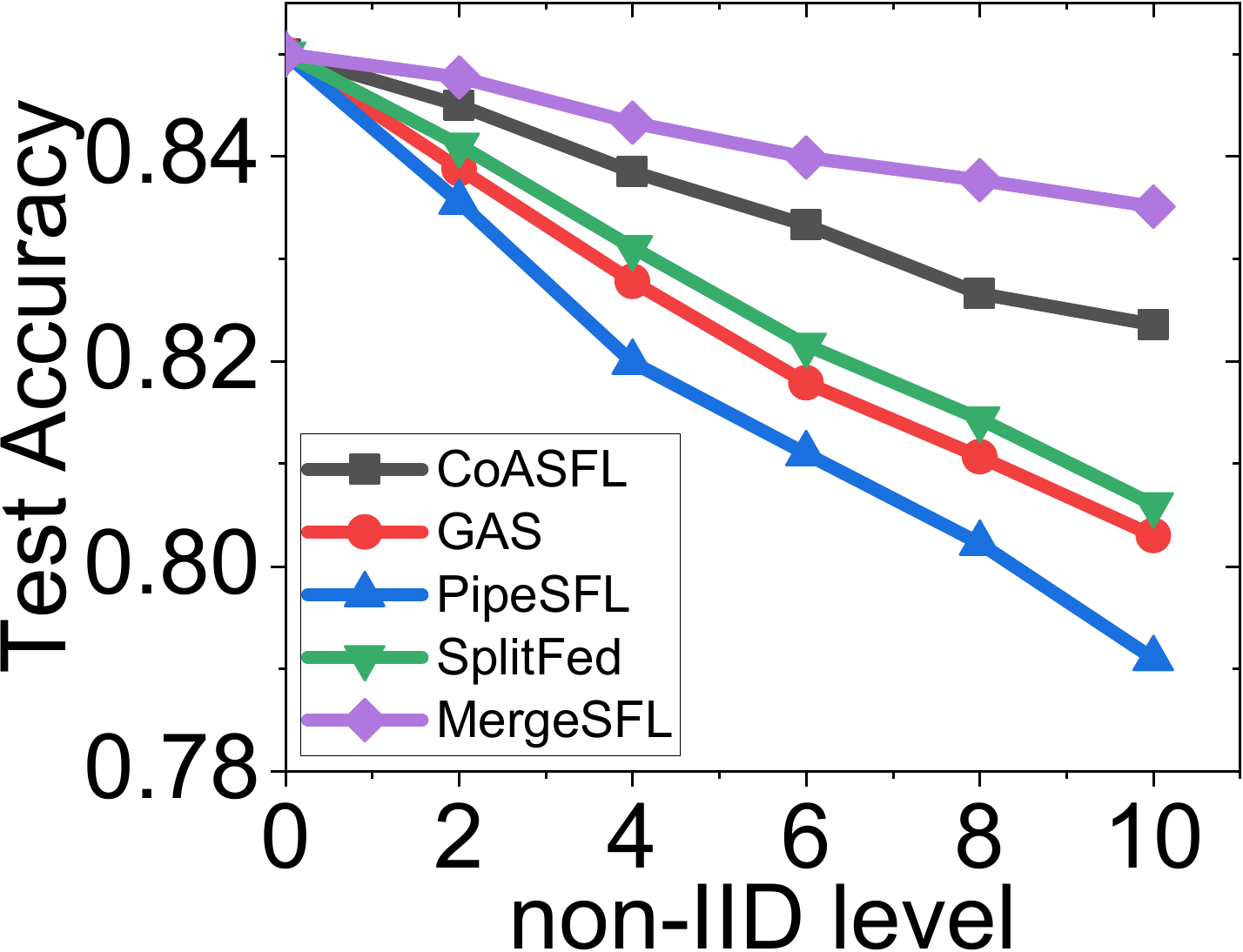}
    }
    \hfill
    \subfloat[QNLI\label{fig:noniid_level_acc_roberta_qnli}]{
        \includegraphics[height=4.2cm]{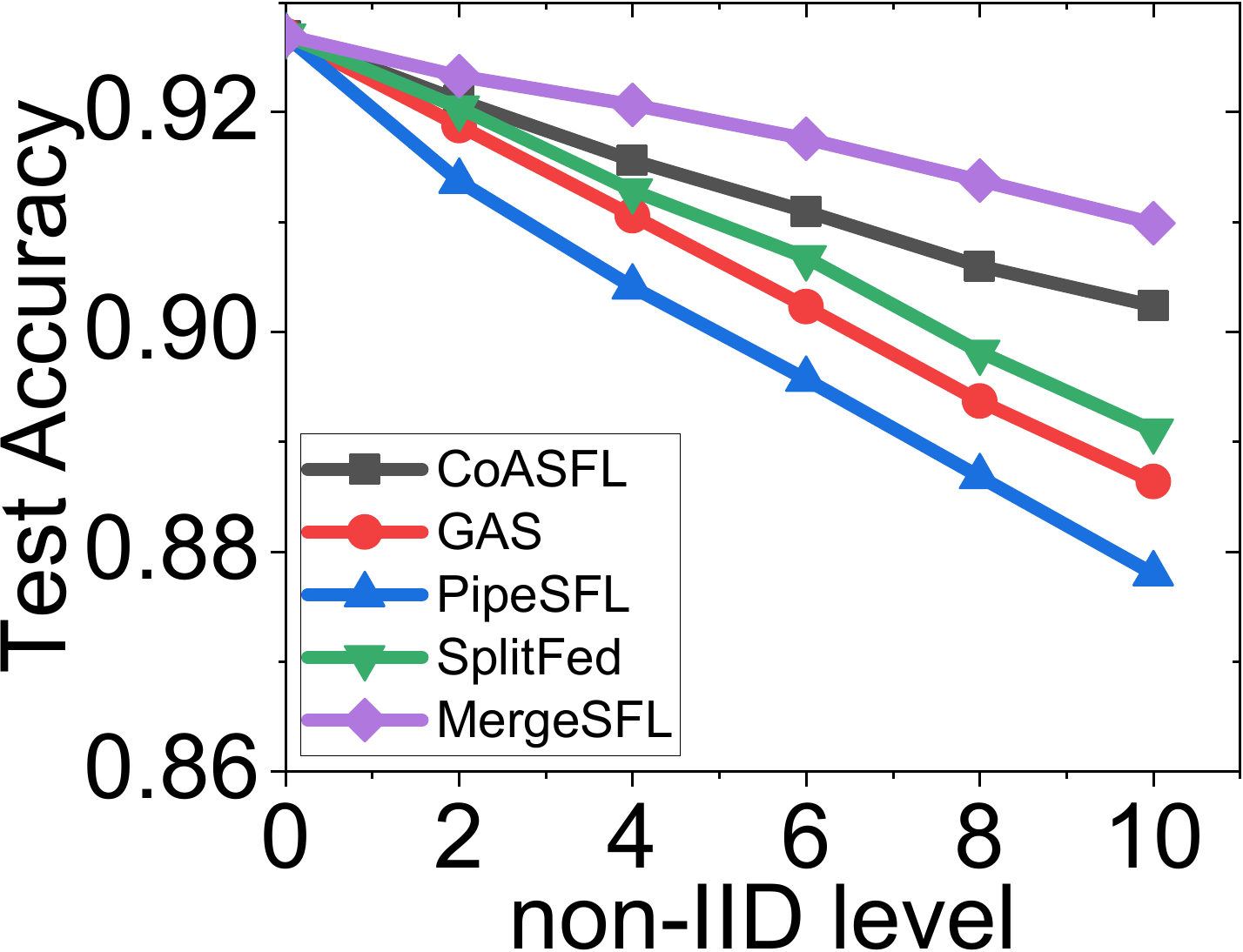}
    }
    \hfill
    \subfloat[Newsgroups\label{fig:noniid_level_acc_qwen_newsgroup}]{
        \includegraphics[height=4.2cm]{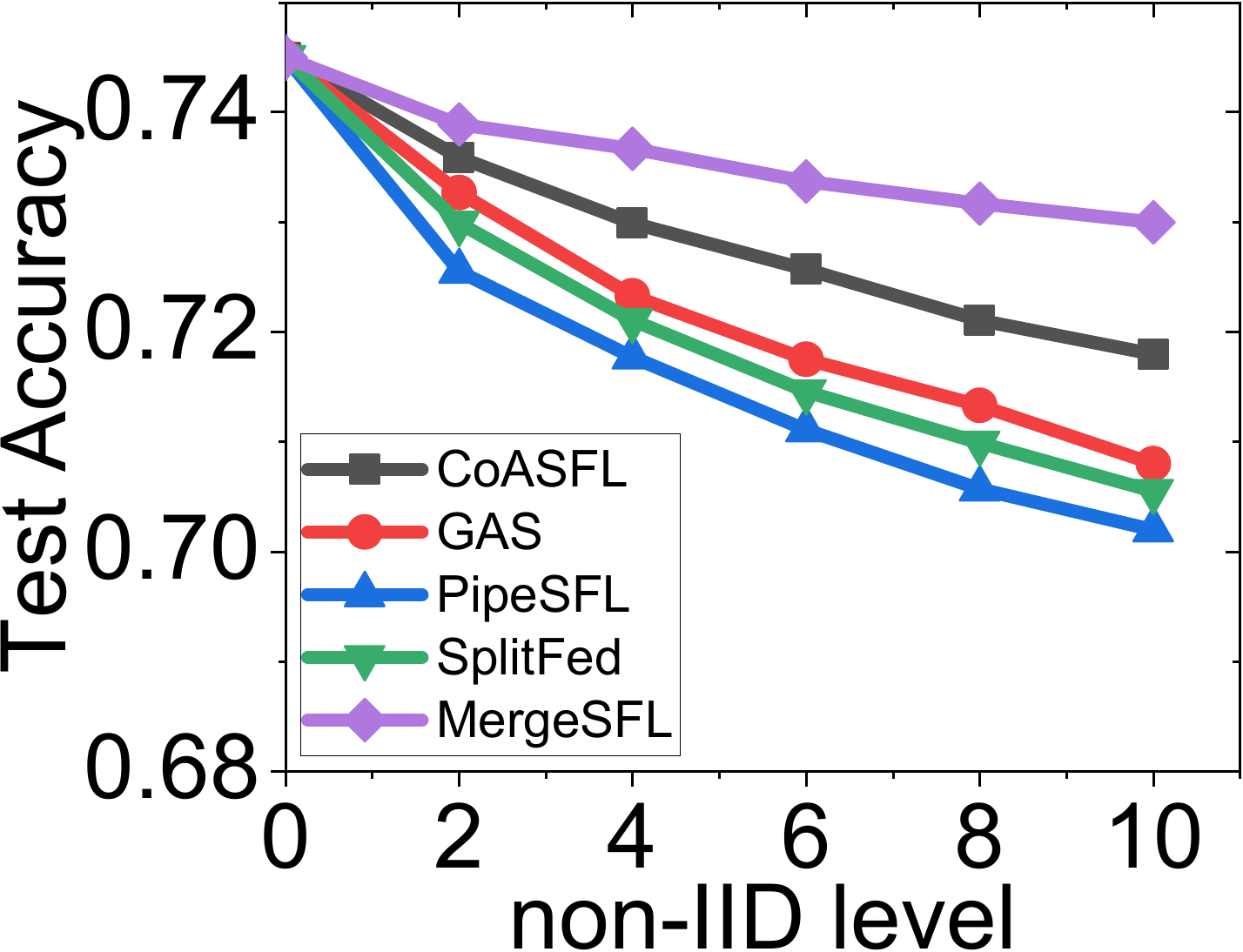}
    }
    \caption{Test accuracy under different non-IID levels}
    \label{fig:noniid_level_acc}
\end{figure*}

\begin{figure}[t]
    \centering
    \subfloat[IID\label{fig:key_strategy_deit_cifar_iid}]{\includegraphics[height=3.3cm]{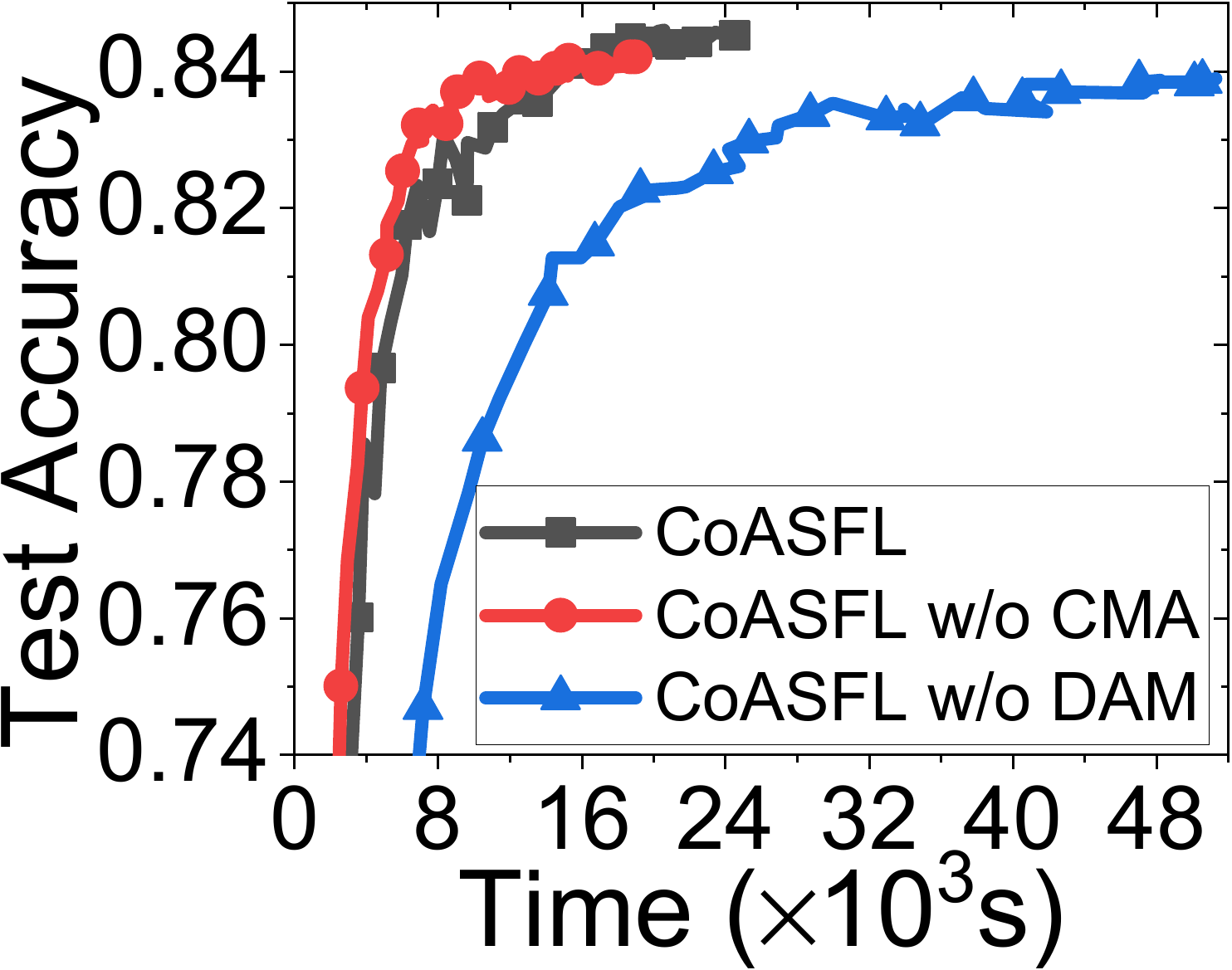}}
    \hspace{0.2cm}
    \subfloat[Non-IID\label{fig:key_strategy_deit_cifar_noniid}]{\includegraphics[height=3.3cm]{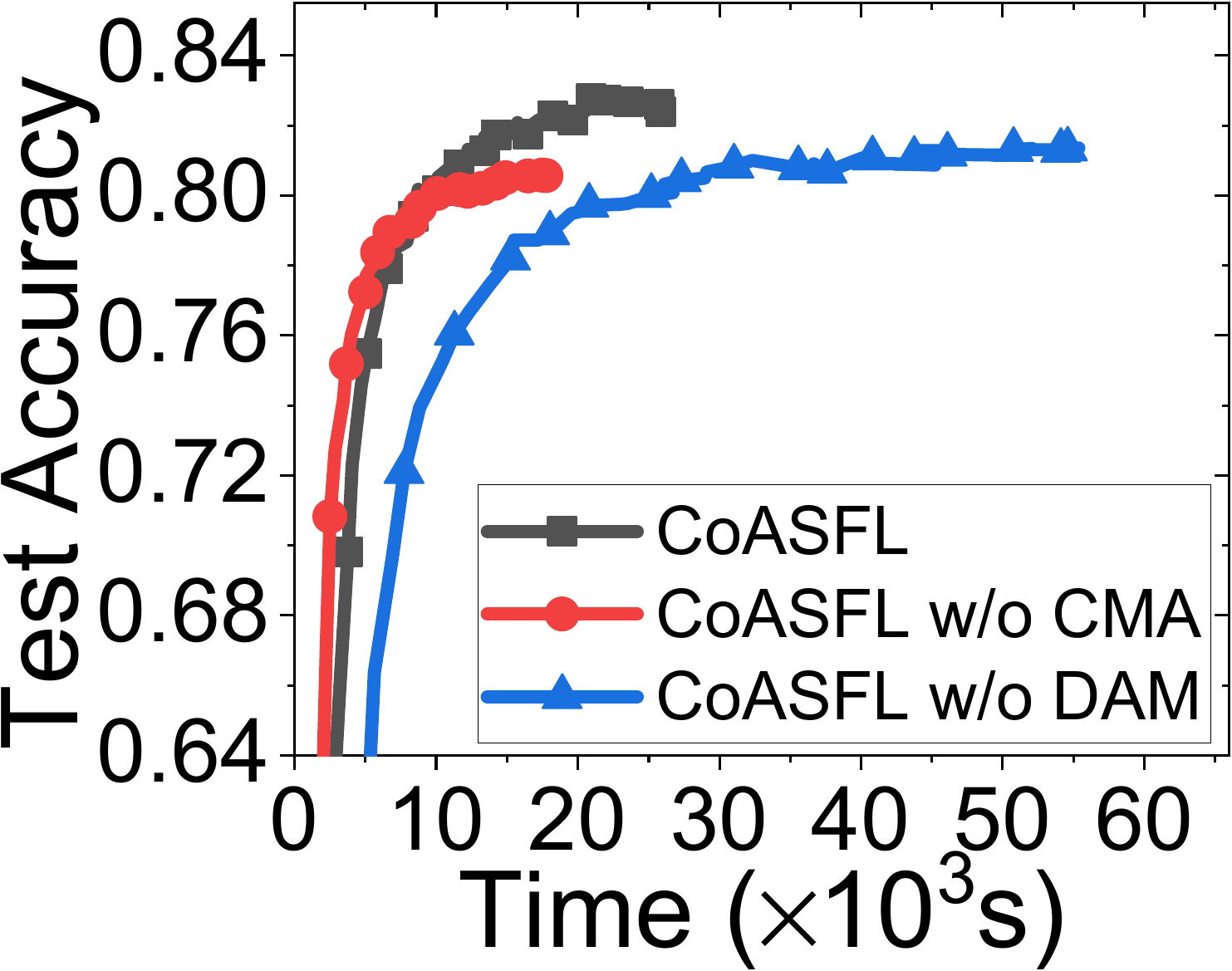}}
    \caption{Effects of DAM and CMA}
    \label{fig:key_strategies}
\end{figure}

 

Finally, to demonstrate the robustness of \method towards system heterogeneity, Fig. \ref{fig:waiting_time} presents the average waiting time of \method and several baseline methods across three datasets.
As shown in Fig. \ref{fig:waiting_time}, \method consistently achieves significantly lower waiting time than GAS and MergeSFL.
Specifically, while GAS utilizes a buffering mechanism, it strictly waits for the buffer to populate without a flexible timeout, leaving it vulnerable to blocking delays caused by stragglers. 
Similarly, MergeSFL relies on an edge-node selection strategy that imposes a rigid synchronization barrier, which inevitably leaves unselected nodes idle and incurs substantial waiting overhead.
Consequently, despite outperforming schemes such as PipeSFL and SplitFed, both baselines exhibit significantly higher waiting times than \method.
For example, as illustrated in Fig. \ref{fig:waiting_time_cifar}, when training DeiT on CIFAR-100, the average waiting time of \method is only 24s, whereas GAS and MergeSFL require 33s and 39s, respectively.
More importantly, \method enables flexible feature matching through dynamic feature aggregation and a timeout mechanism, allowing it to fully utilize system resources and prevent edge nodes from waiting indefinitely. 
This design significantly reduces the average waiting time. 
In Fig. \ref{fig:waiting_time_newsgroups}, \method achieves 54s average waiting time for training Qwen2.5 on Newsgroups, compared with 72s, 123s, 157s, and 106s for GAS, PipeSFL, SplitFed, and MergeSFL, respectively.

\begin{table}[t]  
    \centering       
    \caption{Training Performance of Various Buffer Capacity}  
    \label{tab:key_param}     
    \renewcommand{\arraystretch}{1.4}
    \begin{adjustbox}{width=\linewidth}
        \begin{tabular}{c|c|c|c|c|c}
        \toprule 
        \diagbox{Metrics}{$\kappa$} &
    
        \makecell[c]{5\%} & 
        \makecell[c]{10\%} & 
        \makecell[c]{20\%} & 
        \makecell[c]{30\%} &
        \makecell[c]{50\%} \\
        \cline{1-6}
        Accuracy(\%) & 
        78.11 & 80.2 & 82.36 & 82.75 & 83.24 \\
        \cline{1-6}

        Time (s)& 
        9,151& 10,731& 12,553& 16,332& 21,016\\

        \bottomrule
    \end{tabular}
    \end{adjustbox}
\end{table}

\subsection{Effect of Non-IID Levels}
To demonstrate the effectiveness of the \method under non-IID data, we present the test accuracy of different approaches at varying non-IID levels in Fig. \ref{fig:noniid_level_acc}, in which the model accuracy of the five approaches on all the datasets decreases as the non-IID level increases.
However, the performance of \method is very close to MergeSFL and outperforms the other baselines on all datasets.
PipeSFL ignores statistical heterogeneity and exhibits the lowest accuracy.
To handle statistical heterogeneity, GAS employs an adaptive approach that samples and generates intermediate activations, while MergeSFL performs client selection via activation merging, partially mitigating non-IID effects on training.
Specifically, as illustrated in Fig. \ref{fig:noniid_level_acc_deit_cifar}, \method can achieve an improvement of test accuracy by about 2.06\%, 3.26\%, 1.76\% on CIFAR-100 with non-IID level of $p$=10, compared to the baselines (\ie, GAS, PipeSFL, SplitFed), and differs from MergeSFL by only 1.15\%.
Notably, by Fig. \ref{fig:noniid_level_acc_roberta_qnli}, with non-IID level of $p$=10 on QNLI, \method achieves an improvement of final accuracy by about 1.62\%, 2.44\%, 1.14\%, compared to the baselines (\ie, GAS, PipeSFL, SplitFed), and differs from MergeSFL by only 0.22\%.
Besides, as shown in Fig. \ref{fig:noniid_level_acc_qwen_newsgroup}, while transitioning from IID to non-IID level of $p$=10 on Newsgroups, \method and MergeSFL suffer from only 2.7\%, 1.5\% loss in accuracy, while the accuracy loss for GAS, PipeSFL, and SplitFed is 3.71\%, 4.15\%, and 3.84\%, respectively.
Collectively, these results demonstrate that \method maintains competitive model accuracy as statistical heterogeneity increases, confirming its robustness across different non-IID levels.

\subsection{Effect of Key Strategies}
There are two key strategies of \method, \ie, dynamic activation merging and consistency-aware model aggregation, being developed to enhance the performance of SFL.
Herein, we conduct several sets of experiments for training DeiT on CIFAR-100 with IID distribution ($p$=0) and non-IID distribution ($p$=10) to evaluate the effectiveness of the two critical strategies.
We adopt the \method without dynamic activation merging (\method w/o DAM) and \method without consistency-aware model aggregation (\method w/o CMA) as the baselines.
In \method w/o DAM, the PS directly updates the top model upon each activation arrival without activation selection or merging.
While in \method w/o CMA, the aggregation weight of each client follows the typical assignment method, \ie, the more iterations a client participates in, the greater the weight assigned.
By Fig. \ref{fig:key_strategies}, \method w/o CMA converges faster than \method w/o DAM on the IID and Non-IID datasets, but \method w/o DAM achieves a higher test accuracy than \method w/o CMA on the Non-IID dataset.
Specifically, \method w/o CMA reduces the total training time by about 56.1\% compared to \method w/o DAM, but \method w/o DAM improves the final test accuracy by about 1.27\% on the non-IID dataset, compared to \method w/o CMA.
Powered by dynamic activation merging and consistency-aware model aggregation, \method can speed up training by about 2.09 $\times$ compared to \method w/o DAM, and improve the final test accuracy by about 2.1\% compared to \method w/o CMA, which reflects the positive roles of the two strategies.

\subsection{Effect of Activation Buffer Capacity}
\label{sec:key_parameters}

We use the DeiT on CIFAR-100 to evaluate the performance of the \method under various buffer capacities $\Theta_0$ with a non-IID distribution ($p=10$).
The detailed results are presented in Table \ref{tab:key_param}.
For ease of evaluation, we define the buffer capacity ratio as the proportion of the total number of clients, \ie, $\kappa=\Theta_0/N$.
We observe two key trends.
First, as the ratio $\kappa$ ranges from 5\% to 20\%, the final accuracy improves notably.
This trend demonstrates that an insufficiently large buffer limits activation diversity, thereby affecting the quality of top model updates.
However, as $\kappa$ increases further from 20\% to 50\%, the accuracy plateaus, offering only a marginal gain of 0.88\%.
Conversely, the training time increases significantly with a larger buffer.
The time cost for increasing $\kappa$ from 5\% to 20\% is tolerable, but it increases rapidly by approximately 68.9\% as $\kappa$ grows from 20\% to 50\%, substantially weakening the efficiency benefit.
A larger buffer provides more candidate combinations but may also increase activation matching and waiting delays.
Considering this trade-off, $\kappa=20\%$ (\ie, $\Theta_0=N/5$) provides the most favorable balance, and we use this ratio in all other experiments.

\section{Related Work}\label{sec:relworks}

In the presence of pronounced heterogeneity in computation capability and communication bandwidth across clients, a common research addresses these challenges through structural adaptation of split models.
Several studies dynamically determine split points under heterogeneous resource constraints.
For example, SmartSplit \cite{tian2024breaking} analyzes each client's memory budget and runtime training capability to dynamically assign client-specific split layers, thereby improving training efficiency while maintaining model accuracy.
Similarly, ESFL \cite{ESFL2024Zhu} allocates bottom models of different depths to heterogeneous clients and jointly optimizes server-side computation resource allocation, enhancing training efficiency under system heterogeneity.
HFSL \cite{HFSL2024Wu} introduces auxiliary layers to accommodate heterogeneous model partitions and leverages self-distillation to improve training accuracy.
HeteroSFL \cite{HeteroSFL2025Chen} employs bottleneck layers of varying widths to accommodate heterogeneous communication capabilities and uses bidirectional knowledge sharing to mitigate non-IID data effects.
Similarly, HASFL \cite{lin2025hasfl} jointly optimizes adaptive batch-size scheduling and split-layer selection according to client capabilities, balancing training latency and convergence performance.
MergeSFL \cite{10597905} combines feature merging with batch-size regulation to address system and statistical heterogeneity under synchronous SFL.
Although these approaches improve training efficiency under heterogeneous resource constraints, most still rely on synchronous training, where the progress of each round is constrained by the slowest participating client. 
As computation and communication delays vary across clients, faster clients must wait at synchronization barriers, leaving their resources underutilized and limiting overall training efficiency. 

To alleviate this bottleneck, several studies further explore asynchronous execution, allowing the server to process available client activations without waiting for slower clients.
DT-ASFL \cite{stephanie2023digital} leverages digital twins to monitor device states and support asynchronous model updates.
PipeSFL \cite{gao2024pipesfl} adopts a priority-based scheduling strategy at the server and enables asynchronous execution within aggregation rounds to improve training efficiency.
From another perspective, RingSFL \cite{shen2024ringsfl} constructs a ring topology to distribute model training across heterogeneous clients and exploits overlapping layers to improve model updates.
These approaches primarily focus on alleviating straggler effects and improving execution efficiency, rather than explicitly addressing the optimization inconsistencies arising from asynchronous client participation.
GAS \cite{yang2025gas} further considers the biased top-model updates caused by unequal activation arrivals and generates supplementary activations to balance client influence.
However, its bottom-model aggregation does not explicitly account for the progress mismatch between the client and server model partitions.
In contrast, \method mitigates both the horizontal inconsistency caused by unequal client participation and the vertical inconsistency arising from progress mismatch between client-side bottom models and the server-side top model through dynamic activation merging and consistency-aware model aggregation.

\section{Conclusion}\label{sec:conclusion}
In this paper, we characterize horizontal inconsistency across clients and vertical inconsistency between clients and the server in asynchronous SFL.
Then we propose \method, an inconsistency-aware asynchronous SFL framework that incorporates a dynamic activation merging mechanism and a consistency-aware model aggregation mechanism to effectively suppress inconsistency propagation throughout asynchronous training.
The experimental results show that the \method can speed up model training by at least 1.61$\times$ to 3.74$\times$ and improve final model accuracy by up to 3.26\% in heterogeneous scenarios, compared to the baselines.


\bibliographystyle{IEEEtran}
\bibliography{bib/refs.bib}

\end{document}